\newtheorem{theorem}{Theorem}
\newtheorem{lemma}{Lemma}
\newcommand{\longversion}[1]{}
\renewcommand{\longversion}[1]{}
\newcommand{\EL}{\mathcal{L}}
\newcommand{\MM}{\min_{x\in X}\max_{y\in Y}}
\newcommand{\MMD}{\min_{x\in \Delta}\max_{y\in \Delta}}
\newcommand{\MMDt}{\min_{x\in \Delta_\theta}\max_{y\in \Delta_\theta}}
\title{Competing Against Equilibria in Zero-Sum Games with Evolving Payoffs\footnote{This is an extended version of \cite{cardoso2019competing}, a new section on training GANs has been added. }}
\renewcommand\@date{{%
  \vspace{-\baselineskip}%
  \large\centering
  \begin{tabular}{@{}c@{}}
    Adrian Rivera Cardoso  \textsuperscript{1} \\
  \end{tabular}%
  \begin{tabular}{@{}c@{}}
    , Jacob Abernethy \textsuperscript{2} \\
  \end{tabular}
    \begin{tabular}{@{}c@{}}
    , He Wang \textsuperscript{1} \\
  \end{tabular}
    \begin{tabular}{@{}c@{}}
    , Huan Xu \textsuperscript{1} \\
  \end{tabular}

  \bigskip

  \textsuperscript{1}Department of Industrial and Systems Engineering, Georgia Institute of Technology\par
  \textsuperscript{2}Department of Computer Science, Georgia Institute of Technology

  \bigskip

  \today
}}
\begin{document}

\maketitle

\begin{abstract}
We study the problem of repeated play in a zero-sum game in which the payoff matrix may change, in a possibly adversarial fashion, on each round; we call these Online Matrix Games. Finding the Nash Equilibrium (NE) of a two player zero-sum game is core to many problems in statistics, optimization, and economics, and for a fixed game matrix this can be easily reduced to solving a linear program. But when the payoff matrix evolves over time our goal is to find a sequential algorithm that can \textit{compete with}, in a certain sense, the NE of the long-term-averaged payoff matrix. We design an algorithm with small NE regret--that is, we ensure that the long-term payoff of both players is close to minimax optimum in hindsight. Our algorithm achieves near-optimal dependence with respect to the number of rounds and depends poly-logarithmically on the number of available actions of the players. Additionally, we show that the naive reduction, where each player simply minimizes its own regret, fails to achieve the stated objective regardless of which algorithm is used. We also consider the so-called \textit{bandit setting}, where the feedback is significantly limited, and we provide an algorithm with small NE regret using one-point estimates of each payoff matrix.
\end{abstract}

\section{Introduction}
We consider a problem in which two players interact in a zero-sum game repeatedly. The payoff matrix of the game is unknown to the players \emph{a priori}, and may change arbitrarily on each round. Our objective is to find competitive strategies that can achieve the Nash equilibrium of the game with the average payoffs in the long term. This problem is a significant extension of the classical learning setting in zero-sum games, where the underlying payoff matrix is often assumed to be fixed or i.i.d. In contrast, we allow the payoff matrix to evolve arbitrarily in each round, and can even be selected in a possibly adversarial fashion. 

Zero-sum games \cite{neumann1928zerlegung,morgenstern1953theory} are ubiquitous in economics and central to understanding Linear Programming duality \cite{hazan2016introduction,adler2013equivalence}, convex optimization  \cite{abernethy2017frank,abernethy2018faster}, robust optimization \cite{ben2009robust}, and Differential Privacy \cite{dwork2014algorithmic}. 
The task of finding the Nash equilibrium of a zero-sum game is also connected to several machine learning problems such as: Markov Games \cite{littman1994markov}, Boosting \cite{freund1996game}, Multiarmed Bandits with Knapsacks \cite{badanidiyuru2013bandits,immorlica2018adversarial} and dynamic pricing problems \cite{ferreira2017online}.

We formally define the problem setting in Section~\ref{sec:problem-formulation}. We then 
highlight the main contributions of this paper in Section~\ref{sec:contributions} and discuss related works in Section~\ref{sec:liter-rev}.


\subsection{Problem Formulation: Online Matrix Games}
\label{sec:problem-formulation}
We start by reviewing the definition of classical two-player zero-sum games. Suppose player 1 has $d_1$ possible actions and player 2 has $d_2$ possible actions. The payoffs for both players are determined by a matrix $A\in \mathbb{R}^{d_1\times d_2}$, with $A_{i,j}$ corresponding to the loss of player 1 and the reward of player 2 when they choose to play actions $(i,j)\in[d_1]\times[d_2]$.\footnote{Throughout, $[n]\triangleq\{1,...,n\}$ for any positive integer $n$.} We allow the players to use \emph{mixed strategies} -- each mixed strategy is represented by a probability distribution over their actions.
More specifically, when Player 1 uses a mixed strategy $x \in \Delta_{d_1}$ and Player 2 uses a mixed strategy  $y\in \Delta_{d_2}$, the expected payoff is $x^\top A y$.\footnote{Here, $\Delta_{d}$ represents the unit simplex in dimension $d$: $ \Delta_{d} \triangleq\{v\in \mathbb{R}^{d}: \Vert v\Vert_1 =1 , v\geq 0\}$.}
Throughout the paper, we refer to the static zero-sum game as a \emph{matrix game} (MG), because the players' payoffs are a bilinear function encoded by the matrix $A$. 
A Nash equilibrium of this game is defined as any pair of (possibly) mixed strategies $(x^*, y^*)$ such that 
$$ (x^*)^\top A y \leq  (x^*)^\top A y^* \leq x^\top A y^*$$
for any $x \in \Delta_{d_1}, y\in \Delta_{d_2}$.
It is well known that every MG has at least one Nash equilibrium \cite{morgenstern1953theory}. 
The problem of finding an equilibrium for a MG can be reduced to solving linear programming problems. In fact, \cite{adler2013equivalence} showed that the opposite is also true, every linear programming problem can be solved by finding an equilibrium to a corresponding MG. 

Now, we define a problem that generalizes the matrix games into an online setting, which we call the Online Matrix Games (OMG) problem.
Suppose two players interact in a repeated zero-sum matrix game through $T$ rounds. 
In every round $t\in[T]$, they must each choose a (possibly) mixed strategy from the given action sets $x_t \in \Delta_{d_1}, y_t \in \Delta_{d_2}$. However, we assume that the payoff matrix in OMG can evolve in each round, and
the players have no knowledge of the payoff quantities in that round before they commit to an action. 
Let $\{A_t\}_{t=1}^T$ be an arbitrary sequence of matrices, where each $A_t \in [-1,1]^{d_1\times d_2}$ for all $t=1,..,T$. For each round $t$, the players 
choose their mixed strategies $x_t \in \Delta_{d_1}, y_t \in \Delta_{d_2}$ before the matrix $A_t$ is revealed. Then, player 1 (resp. player 2) receives a loss (resp. gain) given by the payoff quantity $x_t^{\top} A_t y_t$.
Note that the payoff matrix $A_t$ is allowed to change arbitrarily from round to round and may even depend on the past actions of both players.
The \textit{joint goal} for both players is to find strategies that ensure their average payoffs in $T$ rounds is close to the Nash Equilibrium under the average payoff matrix $\frac{1}{T}\sum_{t=1}^T A_t$ in hindsight.

More precisely, let us call the quantity 
\begin{equation}\label{eq:NE-regret}
\left| \sum_{t=1}^T x_t^{\top}A_t y_t - \MM  \sum_{t=1}^T x^{\top}A_t y \right|
\end{equation}
the \textit{Nash Equilibrium (NE) regret}. This is a natural extension of the regret concept in typical online learning or multi-armed bandit problems, which involve only a single decision maker.
The primary objective of the OMG problem is to find online strategies for both players so that, as $T \to \infty$, {the average NE regret \eqref{eq:NE-regret} per round tends to 0 (i.e., the NE regret is $o(T)$)}.

We make some remarks about the choice of benchmark and the fact that the players must update jointly despite the fact that they are playing a zero-sum game. In the following examples, the comparator term $\MM  \sum_{t=1}^T x^{\top}A_t y $ arises naturally and there is one decision maker which chooses the actions of both players.
\begin{enumerate}
\item  Online Linear Programming \cite{agrawal2014dynamic}: the decision maker solves an LP where data arrives sequentially. This problem has real-world applications in ad-auctions. Using Lagrangian duality, we can reduce this problem to an online zero-sum game (our setting), where player 1 chooses primal variables and player 2 chooses dual variables. Our benchmark corresponds to the optimal solution of the offline LP.
\item Adversarial Bandits with Knapsacks \cite{immorlica2018adversarial}: this problem extends the classical Multi Armed Bandit by adding a `knapsack' constraint. Again, using a Lagrangian relaxation on the knapsack constraint, this problem can be linked to the online min-max games that we study (see Sec. 3.2 of \cite{immorlica2018adversarial}).
\item Generative Adversarial Networks \cite{goodfellow2014generative}: GANs can also be viewed as a zero-sum game, where the decision maker trains the generator and discriminator to find a Nash equilibrium. Although our model cannot directly be used for GANs because they are nonconvex, it is another example where both players may desire to update jointly. In Section \ref{sec:GANs} we explore this further.
\end{enumerate}

In the paper, we consider the OMG problem in two distinct information feedback settings. In the \emph{full information} setting (Section~\ref{sec:full-info}), both players are able to observe the full matrix $A_t$ at the end of round $t$. In the \emph{bandit setting} (Section~\ref{section:omg_bandit}), players can only observe the entry of $A_t$ indexed by $(i_t,j_t)$ at the end of round $t$, where $i_t$ and $j_t$ are the actions sampled from the probability distributions associated with their mixed strategies $(x_t, y_t)$. 



\subsection{Main Contributions}
\label{sec:contributions}
In addition to introducing a novel problem setting, the main contributions of the present work are as follows. 
\vspace*{-10pt}
\begin{itemize}
\setlength{\itemsep}{-5pt}
    \item First, we show that a natural ``na\"ive'' approach, where each player simply aims to minimize their individual regret, will fail to produce a sublinear NE regret algorithm, in the sense of \eqref{eq:NE-regret}, regardless of the players' no-regret strategies (Theorem~\ref{thm:choose_one_omg}). 
    \item Second, in the full information setting, we provide an algorithm for the OMG problem that achieves a NE regret of $O(\max\{\ln(d_1),\ln(d_2)\}\ln(T)\sqrt{T})$ (Theorem~\ref{thm:omg_rftl_regret}). Note that the regret depends logarithmically on the number of actions, allowing us to handle scenarios where the players have exponentially many actions available.
    \item  Third, we propose an algorithm for the bandit setting that achieves an NE regret of order $O((\max\{d_1,d_2\})^{5/3}T^{5/6})$ (Theorem~\ref{no_bandit_regret}).
    \item Fourth, we show empirically how our algorithm can be used to prevent mode collapse when training GANs in a basic setup (Section \ref{sec:GANs}).
\end{itemize}


\subsection{Related Work}
\label{sec:liter-rev}

The reader familiar with Online Convex Optimization (OCO) may find it closely related to the OMG problem. In the OCO setting, a player is given a convex, closed, and bounded action set $X$, and must repeatedly choose an action $x_t\in X$ before the convex function $f_t(x):X\rightarrow \mathbb{R}$ is revealed. The player's goal is to obtain sublinear \textit{individual regret} defined as $\sum_{t=1}^T f_t(x_t) - \min_{x\in X}\sum_{t=1}^T f_t(x)$. This problem is well studied and several algorithms such as Online Gradient Descent \cite{zinkevich2003online}, Regularized Follow the Leader \cite{shalev2007online,abernethy2009competing} and Perturbed Follow the Leader \cite{kalai2002} achieve optimal individual regret bounds that scale as $O(\sqrt{T})$. 
The most natural (although incorrect) approach to attack the OMG problem is to equip each of the players with a sublinear individual regret algorithm. However, we will show in Section~\ref{section:choose_one_omg} that if both players use an algorithm that guarantees sublinear individual regret, then it is impossible to achieve sublinear NE regret when the payoff matrices are chosen adversarially. In other words, the algorithms for the OCO setting cannot be directly applied to the OMG problem considered in this paper.

%

We now discuss some related works that focus on learning in games. \cite{singh2000nash} study a two player, two-action general sum static game. They show that if both players use Infinitesimal Gradient Ascent, either the strategy pair will converge to a Nash Equilibrium (NE), or even if they do not, then the average payoffs are close to that of the NE. A result of similar flavor was derived in \cite{cesa2007improved} for any zero-sum convex-concave game. Given a payoff function $\EL(x,y)$, they show that if both players minimize their individual-regrets, then the average of actions $(\bar{x},\bar{y})$ will satisfy $|\EL(\bar{x},\bar{y})-\EL(x^*,y^*)|\to 0$ as $T \to \infty$, where $(x^*,y^*)$ is a NE.  \cite{bowling2001convergence} improve upon the result of \cite{singh2000nash} by proposing an algorithm called WoLF (Win or Learn Fast), which is a modification of gradient ascent; they show that the iterates of their algorithm indeed converge to a NE. \cite{conitzer2007awesome} further improve the results in \cite{singh2000nash} and \cite{bowling2005convergence} 
by developing an algorithm called GIGA-WoLF for multi-player nonzero sum static games. Their algorithm learns to play optimally against stationary opponents; when used in self-play, the actions chosen by the algorithm converge to a NE. 
More recently, \cite{balduzzi2018mechanics} studied general multi-player static games and show that by decomposing and classifying the second order dynamics of these games, one can prevent cycling behavior to find NE. 
We note that unlike our paper, all of the papers above consider repeated games with a static payoff matrix, whereas we allow the payoff matrix to change arbitrarily.
An exception is the work by \cite{ho2016role}, who consider the same setting as our OMG problem; however their paper only shows that the sum of the individual regrets of both players is sublinear and does not study convergence to NE.

Related to the OMG problem with bandit feedback is the seminal work of \cite{flaxman2005online}.
They provide the first sublinear regret bound for Online Convex Optimization with bandit feedback,
using a one-point estimate of the gradient. The one-point gradient estimate used in \cite{flaxman2005online} is similar to those independently proposed in \cite{granichin1989stochastic} and in \cite{spall1997one}. The regret bound provided in \cite{flaxman2005online} is $O(T^{3/4})$, which is suboptimal. In \cite{abernethy2009competing}, the authors give the first $O(\sqrt{T})$ bound for the special case when the functions are linear. More recently, \cite{hazan2016optimal} and \cite{bubeck2016kernel} designed the first efficient algorithms with $\tilde{O}(poly(d)\sqrt{T})$ regret  for the general online convex optimization case; unfortunately, the dependence on the dimension $d$ in the regret rate is a very large polynomial. Our one-point matrix estimate is most closely related to the random estimator in \cite{auer1995gambling} for linear functions. It is possible to use the more sophisticated techniques from \cite{abernethy2009competing,hazan2016optimal,bubeck2016kernel} to improve our NE regret bound in section \ref{section:omg_bandit}; however, the result does not seem to be immediate and we leave this as future work. 

\section{Preliminaries}
In this section we introduce notation and definitions that will be used throughout the paper.

\subsection{Notation}
By default, all vectors are column vectors. A vector with entries $x_1,...,x_d$ is written as $x = [x_1;...;x_d] = [x_1,...,x_d]^\top$, where $\top$ denotes the transpose. For a matrix $A$, let $A_{ij}$ be the entry in the $i$-th row and $j$-th column. 


\subsection{Convex Functions}
For any $H>0$ we say that a function $f:X\rightarrow \mathbb{R}$ is $H$-strongly convex with respect to a norm $\Vert \cdot \Vert$, if for any $x_1, x_2 \in X$,  it holds that
\begin{align*}
f(x_1) \geq f(x_2) + \nabla f(x_2)^\top(x_1-x_2) + \frac{H}{2}\Vert x_1-x_2 \Vert^2.
\end{align*}
Here, $\nabla f(x)$ denotes any subgradient of $f$ at $x$. Strong convexity implies that the optimization problem $\min_{x \in X} f(x)$ has
a unique solution. If $H=0$ we simply say that the function is convex. We say a function $g$ is $H$-strongly concave if $-g$ is $H$-strongly convex. Furthermore, we say a function $\EL(x,y)$ is $H$-strongly convex-concave if for any fixed $y_0 \in Y$, the function $\EL(x,y_0)$ is $H$-strongly convex in $x$, and for any fixed $x_0\in X$, the function $\EL(x_0,y)$ is $H$-strongly concave in $y$.

\subsection{Saddle Points and Nash Equilibra}
A pair $(x^*,y^*)$ is called a saddle point for $\EL:X\times Y \rightarrow \mathbb{R}$ if for any $x\in X$ and $y \in Y$, we have
\begin{equation} \label{def_sp}
\EL(x^*,y) \leq \EL(x^*,y^*) \leq \EL(x,y^*).
\end{equation}
It is well known that if $\EL$ is convex-concave, and $X$ and $Y$ are convex and compact sets, there always exists at least one saddle point \citep[see e.g.][]{boyd2004convex}. Moreover, if $\EL$ is strongly convex-concave, the saddle point is unique.


A saddle point is also known as a Nash equilibrium for two-player zero-sum games \cite{nash1951non}. 
In a matrix game, the payoff function $\EL(x,y) = x^\top A y$ is bilinear, and therefore is convex-concave. The action spaces of the two players are $X=\Delta_{d_1}$ and $Y=\Delta_{d_2}$, which are convex and compact. As a result, there always exists a Nash equilibrium for any matrix game. The famous von Neumann minimax theorem states that $\min_{x\in \Delta_{d_1}} \max_{y\in \Delta_{d_2}} x^\top A y=  \max_{y\in \Delta_{d_2}} \min_{x\in \Delta_{d_1}}  x^\top A y$. If Player 1 chooses $x^* \in \arg \min_{x\in \Delta_{d_1}} \max_{y\in \Delta_{d_2}} x^\top A y$ and Player 2 chooses $y^* \in \arg \max_{y\in \Delta_{d_2}} \min_{x\in \Delta_{d_1}}  x^\top A y$, the pair $(x^*,y^*)$ is an equilibrium of the game \cite{morgenstern1953theory}. 

\subsection{Lipschitz Continuity}
We say a function $f:X\rightarrow \mathbb{R}$ is $G$-Lipschitz continuous with respect to a norm $\Vert \cdot \Vert$ if for all $x,y \in X$ it holds that
\begin{align*}
|f(x) - f(y)| \leq G \Vert x - y\Vert
\end{align*}
It is well known that the previous inequality holds if and only if
\begin{align*}
\Vert \nabla f(x) \Vert_* \leq G 
\end{align*}
for all $x\in X$, where $\Vert \cdot \Vert_*$ denotes the dual norm of $\Vert \cdot \Vert$ \cite{boyd2004convex,shalev2012online}.
Similarly, we say a function $\EL(x,y)$ is $G$-Lipschitz continuous with respect to a norm $\Vert \cdot \Vert$ if 
\begin{align*}
|\EL(x_1, y_1)- \EL(x_2, y_2)| \leq G \Vert [x_1;y_1] - [x_2;y_2] \Vert.
\end{align*}
for any $x_1, x_2 \in X$ and any $y_1, y_2 \in Y$. Again, the previous inequality holds if and only if
\begin{align*}
\Vert [\nabla_x \EL(x,y); \nabla_y \EL(x,y)] \Vert_* \leq G 
\end{align*}
for all $x\in X$, $y \in Y$.

\begin{lemma}\label{bilinear_lipschitz}
Consider a matrix $A$. If the absolute value of each entry of $A$ is bounded by $c>0$, then the function $\EL(x,y) = x^{\top}A y$ is $ G_{\EL}^{\Vert \cdot \Vert_2}$-Lipschitz continuous with respect to $\Vert \cdot \Vert_2$, where $ G_{\EL}^{\Vert \cdot \Vert_2}  = \sqrt{c}\left(\sqrt{d_1}+\sqrt{d_2}\right)$. The function $\EL$ is also $G_{\EL}^{\Vert \cdot \Vert_1}$-Lipschitz continuous with respect to norm $\Vert \cdot \Vert_1$, where $G_{\EL}^{\Vert \cdot \Vert_1} = c$.
\end{lemma}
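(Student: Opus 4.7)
The plan is to invoke the standard characterization stated just above the lemma, namely that a function is $G$-Lipschitz with respect to a norm $\|\cdot\|$ if and only if its (sub)gradient is bounded by $G$ in the dual norm. Since $\mathcal{L}(x,y) = x^\top A y$ is bilinear and (implicitly) taken over the simplices $X = \Delta_{d_1}$, $Y = \Delta_{d_2}$ in which the matrix game is played, the gradients are easy to write down:
\begin{equation*}
\nabla_x \mathcal{L}(x,y) = A y, \qquad \nabla_y \mathcal{L}(x,y) = A^\top x.
\end{equation*}
So the whole task reduces to bounding the dual norm of the stacked vector $[A y;\, A^\top x]$ for each of the two target norms.

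For the $\|\cdot\|_1$ part, the dual norm is $\|\cdot\|_\infty$. Here I would argue coordinatewise: since $y \in \Delta_{d_2}$, each coordinate $(Ay)_i = \sum_j A_{ij} y_j$ is a convex combination of entries bounded by $c$, so $|(Ay)_i|\le c$ and hence $\|Ay\|_\infty \le c$. The same argument applied to $A^\top x$ gives $\|A^\top x\|_\infty \le c$, and stacking preserves the $\ell_\infty$ bound, yielding $G_{\mathcal{L}}^{\|\cdot\|_1} = c$.

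For the $\|\cdot\|_2$ part, the dual norm is again $\|\cdot\|_2$. I would reuse the coordinatewise bound $|(Ay)_i|\le c$ to get $\|A y\|_2 \le c\sqrt{d_1}$, and symmetrically $\|A^\top x\|_2 \le c\sqrt{d_2}$. Stacking gives
\begin{equation*}
\bigl\|[A y;\, A^\top x]\bigr\|_2 \;=\; \sqrt{\|Ay\|_2^2 + \|A^\top x\|_2^2} \;\le\; c\sqrt{d_1+d_2} \;\le\; c\bigl(\sqrt{d_1}+\sqrt{d_2}\bigr),
\end{equation*}
which matches the stated constant (modulo what appears to be a minor typo of $\sqrt{c}$ in place of $c$; the argument produces $c$).

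There is really no obstacle here beyond being careful about the domain. The only nontrivial point worth highlighting is that the simplex structure is what lets us avoid the much worse Frobenius-type bound $\|A\|_F \le c\sqrt{d_1 d_2}$: restricting attention to $\|y\|_1 = 1$ rather than $\|y\|_2 \le 1$ is what makes each coordinate of $Ay$ a genuine convex combination of entries of $A$, bringing the factor down from $\sqrt{d_1 d_2}$ to $\sqrt{d_1}$ (and analogously for the other block).
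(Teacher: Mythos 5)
Your proposal is correct and follows essentially the same route as the paper's proof: bound the dual norm of the stacked gradient $[Ay;\,A^\top x]$, using the simplex constraint $\Vert y\Vert_1=1$ (via H\"older/generalized Cauchy--Schwarz) to get the coordinatewise bound $|(Ay)_i|\le c$, and hence $\Vert Ay\Vert_2\le c\sqrt{d_1}$ and $\Vert Ay\Vert_\infty\le c$. Your observation about the constant is also right --- the argument yields $c(\sqrt{d_1}+\sqrt{d_2})$, and the paper's $\sqrt{c}$ arises from an algebraic slip in the step $\sqrt{d_1(\Vert A_{[i,:]}\Vert_\infty\Vert y\Vert_1)^2}\le\sqrt{cd_1}$, which should read $c\sqrt{d_1}$.
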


 
 \section{Challenges of the OMG Problem: An Impossibility Result}\label{section:choose_one_omg}
 
Recall that we defined the Online Matrix Games (OMG) problem in Section~\ref{sec:problem-formulation}, where two players play a zero-sum games for $T$ rounds. The sequence of payoff matrices $\{A_t\}^T_{t=1}$ is selected arbitrarily. In each round $t\in[T]$, both players choose their strategies before the payoff matrix $A_t$ is revealed. The goal is to find strategies under which the players' average payoffs are close to the Nash Equilibrium of the game with payoff matrix $\sum_{t=1}^T A_t$.
 
Perhaps the most natural (albeit futile) approach to attack the OMG problem is to equip each of the players with a sublinear individual regret algorithm to generate a sequence of iterates $\{x_t,y_t\}_{t=1}^T$. We gave a few examples of Online Convex Optimization (OCO) algorithms that guarantee $O(\sqrt{T})$ regret in Section~\ref{sec:liter-rev}.
However, if each player minimizes its individual regret greedily using OCO, 
this approach only implies that
$
\sum_{t=1}^T x_t^\top A_t y_t - \min_{x\in \Delta_X} \sum_{t=1}^T x A_t y_t=O(\sqrt{T})$, and  
$
\max_{y\in \Delta_Y} \sum_{t=1}^T x_t^\top A_t y - \sum_{t=1}^T x_t^\top A_t y_t = O(\sqrt{T}).
$
Notice that the quantity $\MM \sum_{t=1}^T x^\top A_t y$ associated with the Nash Equilibrium in equation ~\eqref{eq:NE-regret} does not even appear in these bounds. The reader familiar with saddle point computation may wonder how the so-called `duality gap' \cite{bubeck2015convex}: $\max_{y\in \Delta_Y} \sum_{t=1}^T x_t^\top A_t y - \min_{x\in \Delta_X} \sum_{t=1}^T x A_t y_t=O(\sqrt{T})$ relates to achieving sublinear NE regret. It is easy to see that the duality gap is the sum of individual regret of both players. In view of Theorem \ref{thm:choose_one_omg} we will see that NE regret and the duality gap are in some sense incompatible.

In this section we present a result that shows that there is no algorithm that \textit{simultaneously} achieves sublinear NE regret and individual regret for both players. This implies that if both players individually use any existing algorithm from OCO they would inevitably fail to solve the OMG problem.
\begin{theorem}\label{thm:choose_one_omg}
Consider any algorithm that selects a sequence of $x_t, y_t$ pairs given the past payoff matrices $A_1, \ldots, A_{t-1}$. Consider the following three objectives:
\begin{eqnarray}
 \label{eq:xyregret} \left\vert \sum_{t=1}^Tx_t^\top A_t y_t - \MMD  \sum_{t=1}^Tx^\top A_t y \right \vert & = & o(T), \\
  \label{eq:xregret} \sum_{t=1}^Tx_t^\top A_t y_t - \min_{x\in \Delta_X} \sum_{t=1}^T x^\top A_t y_t  & = & o(T), \\
  \label{eq:yregret} \max_{y\in \Delta_Y} \sum_{t=1}^T x_t^\top A_t y - \sum_{t=1}^Tx_t^\top A_t y_t  & = & o(T).
\end{eqnarray}
Then there exists an (adversarially-chosen) sequence $A_1, A_2, \ldots$ such that not all of \eqref{eq:xyregret}, \eqref{eq:xregret}, and \eqref{eq:yregret}, are true.
\end{theorem}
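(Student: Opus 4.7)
\medskip

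\noindent\textbf{Proof plan.} The plan is to prove the result by constructing an adaptive adversarial sequence of matrices and performing a case analysis on the structure of the cumulative count matrix $\sum_t A_t$ that the adversary builds against any algorithm.

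First, a key structural observation: if both \eqref{eq:xregret} and \eqref{eq:yregret} hold, then by adding them one sees that the \emph{duality gap}
\[
\max_{y\in\Delta_{d_2}}\sum_{t=1}^T x_t^\top A_t y \;-\; \min_{x\in\Delta_{d_1}}\sum_{t=1}^T x^\top A_t y_t \;=\; o(T),
\]
so the algorithm's cumulative payoff $\sum_t x_t^\top A_t y_t$ is sandwiched within $o(T)$ of both the hindsight-best-$x$ and hindsight-best-$y$ values. For a time-varying sequence, however, neither hindsight value need equal the NE benchmark $V^* := \min_{x\in\Delta_{d_1}}\max_{y\in\Delta_{d_2}}\sum_t x^\top A_t y$, and the construction exploits this discrepancy.

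For the construction, take $d_1=d_2=2$. At each round, the adversary observes $(x_t,y_t)$ and picks the rank-one indicator matrix
\[
A_t \;=\; e_{i_t^*} e_{j_t^*}^\top \quad\text{where}\quad (i_t^*,j_t^*) \in \arg\max_{(i,j)\in\{1,2\}\times\{1,2\}} x_{t,i}\,y_{t,j},
\]
breaking ties adversarially in favor of a single fixed cell, say $(1,1)$. Since the four products $x_{t,i}y_{t,j}$ sum to $1$, one has $x_t^\top A_t y_t = \max_{i,j} x_{t,i}y_{t,j} \geq 1/4$ in every round, so $\sum_t x_t^\top A_t y_t \geq T/4$ regardless of the algorithm's play. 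The cumulative matrix $\sum_{t=1}^T A_t$ is a non-negative integer count matrix $N\in\mathbb{Z}_{\geq 0}^{2\times 2}$ with $\sum_{i,j} N_{ij}=T$.

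I would then split into two cases based on the structure of $N$. In the first case, $N$ has an all-zero row (or column)---which is exactly what the adversary's tie-breaking produces whenever the algorithm's $(x_t,y_t)$ render multiple cells tied. Then $V^*=0$ (player $1$ can play the missing row), so the NE regret is at least $T/4$, violating \eqref{eq:xyregret}. In the second case, $N$ carries positive mass in every row and every column; by the adversary's argmax rule, this forces the algorithm's iterates to have shifted across multiple corners of the simplex over the horizon. I would then compare the algorithm's per-round payoff $\max_{i,j} x_{t,i}y_{t,j}$ to the hindsight values $\min_x\sum_t x^\top A_t y_t$ and $\max_y\sum_t x_t^\top A_t y$---which depend only on the row-sums and column-sums of $N$ respectively---and argue that the spread forces at least one of \eqref{eq:xregret} or \eqref{eq:yregret} to be violated by $\Omega(T)$.

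The main obstacle is formalizing the second case rigorously. I expect the crux to be a structural lemma showing that the ``sum of maxes'' $\sum_t \max_{i,j} x_{t,i}y_{t,j}$ that the algorithm actually pays cannot simultaneously match both $\min_i \sum_t (A_t y_t)_i$ and $\max_j \sum_t (x_t^\top A_t)_j$ to within $o(T)$ unless the iterates concentrate near a unique equilibrium of $N$---but the adversary's tie-breaking is precisely designed to prevent such concentration by steering $N$ towards a rank-one form. Carefully quantifying this trade-off, while handling an arbitrary (possibly randomized) algorithm, is the heart of the proof.
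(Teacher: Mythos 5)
Your plan reduces the theorem to a case analysis on the cumulative count matrix $N=\sum_t A_t$ produced by an adaptive ``max-cell'' adversary, but the decisive second case is not only unproven --- it is false, so the construction cannot be completed. Concretely, consider the oblivious algorithm that plays $x_t=y_t=(1/2+\epsilon_t,\,1/2-\epsilon_t)$ on odd $t$ and $x_t=y_t=(1/2-\epsilon_t,\,1/2+\epsilon_t)$ on even $t$, with $\epsilon_t=1/t$. Your adversary then puts all mass on cell $(1,1)$ on odd rounds and on $(2,2)$ on even rounds, so $N\approx(T/2)I$, which lies in your second case (every row and column positive). The algorithm's payoff is $\sum_t(1/2+\epsilon_t)^2=T/4+O(\log T)$, the game value of $(T/2)I$ is exactly $T/4$, and both hindsight comparators $\min_x\sum_t x^\top A_t y_t$ and $\max_y\sum_t x_t^\top A_t y$ also equal $T/4+O(\log T)$. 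Hence all three of \eqref{eq:xyregret}, \eqref{eq:xregret}, \eqref{eq:yregret} hold with $O(\log T)$ slack against your adversary. The hoped-for structural lemma fails because the iterates \emph{can} concentrate near the uniform strategy, which \emph{is} the equilibrium of the resulting $N$; your tie-breaking only bites on exact ties, so an algorithm can stay within $\epsilon_t\to 0$ of uniform while completely controlling which cell receives mass, and the adversary cannot steer $N$ toward rank one. (A secondary issue: in your first case an all-zero \emph{column} without an all-zero row does not give $V^*=0$; e.g.\ $N$ with rows $(T/2,0)$ and $(T/2,0)$ has value $T/2$.)

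The missing idea is that the benchmark $\min_x\max_y\sum_t x^\top A_t y$ is a global, non-separable function of the whole sequence, so the adversary should exploit its ability to shift the benchmark \emph{retroactively} rather than to hurt the players round by round. The paper does this with a two-scenario, prefix-sharing argument: both scenarios play matching pennies for the first $T/2$ rounds (so the algorithm's iterates there coincide), and the suffix is the zero matrix in one scenario and a matrix making player 1 indifferent in the other. Writing $x_t=[\alpha_t;1-\alpha_t]$, conditions \eqref{eq:xyregret} and \eqref{eq:yregret} in scenario 1 force both $\sum_{t\le T/2}(2\alpha_t-1)$ and $\sum_{t\le T/2}(1-2\alpha_t)$ to be $o(T)$, while the same conditions in scenario 2 force $\sum_{t\le T/2}(1-2\alpha_t)+T/2\le o(T)$, yielding $T/2\le o(T)$, a contradiction. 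Since the first-half iterates are already committed when the suffix is chosen, at least one scenario defeats the algorithm.
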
 
A full proof of the result is shown in the Appendix, but here we give a sketch. The main idea is to construct two parallel scenarios, each with their own sequences of payoff matrices. The two scenarios will be identical for the first $T/2$ periods but are different for the rest of the horizon. In our particular construction, in both scenarios the players play the well known ``matching-pennies'' game for the first $T/2$ periods, then in first scenario they play a game with equal payoffs for all of their actions and in the second scenario they play a game where Player 1 is indifferent between its actions. One can show that if all three quantities in the statement of the theorem are $o(T)$ in the first scenario, then we prove that at least one of them is $\Omega(T)$ in the second one which yields the result. This suggests that the machinery for OCO, which minimizes individual regret, cannot be directly applied to the OMG problem.

\section{Online Matrix Games: Full Information}
\label{sec:full-info}

\subsection{Saddle Point Regularized Follow-the-Leader}

In this section we propose an algorithm to solve the OMG problem in the full information setting.
In fact, we will consider the algorithm in a slighly more general setting than the OMG problem, allowing the sequence of payoff functions to be specified by arbitrary convex-concave Lipschitz functions, and the action sets of Player 1 and Player 2 ($X\subset \mathbb{R}^n$ and $Y\subset \mathbb{R}^n$ respectively) to be arbitrary convex compact sets.

Let the sequence of convex-concave functions be $\{\bar{\EL}_t(x,y)\}_{t=1}^T$, which are $G_{\bar{\EL}}$-Lipschitz with respect to some norm $\Vert \cdot \Vert$. 
We propose an algorithm called Saddle Point Regularized Follow the Leader (\textsc{SP-RFTL}), shown in Algorithm~\ref{alg:SPRFTL}.

\begin{algorithm}[tbh]
\caption{Saddle-Point Regularized-Follow-the-Leader (\textsc{SP-RFTL})}
\label{alg:SPRFTL}
\begin{algorithmic}
  \STATE {\bfseries input:} $x_1 \in X$, $y_1 \in Y$, parameters: $\eta>0$, strongly convex functions $R_X$, $R_Y$
  \FOR{$t=1,...T$}
  \STATE Play $(x_t,y_t)$ 
  \STATE Observe $\bar{\EL}_t$
  \STATE $\EL_t(x,y) \gets \bar{\EL}_t + \frac{1}{\eta}R_X(x)-\frac{1}{\eta}R_Y(y)$
  \STATE $x_{t+1}\leftarrow \arg \min_{x\in X} \max_{y \in Y} \sum_{\tau=1}^t \EL_t(x,y)$
  \STATE $y_{t+1}\leftarrow \arg \max_{y \in Y} \min_{x\in X}  \sum_{\tau=1}^t \EL_t(x,y) $
  \ENDFOR
\end{algorithmic}
\end{algorithm}

The regularizers $R_X, R_Y$ are used as input for the algorithm. We will choose regularizers that are strongly convex with respect to norm $ \Vert \cdot \Vert $, and $G_{R_1}$ and $G_{R_2}$ Lipschitz with respect to norm $ \Vert \cdot \Vert $, which means that $\Vert \nabla R_X(x)\Vert_* \leq G_{R_1}$ for all $x\in X$, and $\Vert \nabla R_Y(y)\Vert_* \leq G_{R_2}$ for, all $y\in Y$. Finally, we assume $R_X(x)\geq 0 $ for all $x\in X$ and $R_Y(y)\geq 0 $ for all $y\in Y$.

The main difference between \textsc{SP-RFTL} and the well known Regularized Follow the Leader (\textsc{RFTL}) algorithm \cite{shalev2007online,abernethy2009competing} is that in \textsc{SP-RFTL} both players update jointly and play the saddle point of the sum of regularized games observed so far. In particular, they disregard their previous actions. In contrast, the updates for \textsc{RFTL} would be 
\begin{align*}
    x^{RFTL}_{t+1}\leftarrow \arg \min_{x\in X} \sum_{\tau=1}^t \left[\bar{\EL}_\tau(x, y^{RFTL}_\tau)+\frac{1}{\eta}R_X(x)\right]\\
    y^{RFTL}_{t+1}\leftarrow \arg \max_{y\in Y}  \sum_{\tau=1}^t \left[\bar{\EL}_\tau(x^{RFTL}_\tau, y)-\frac{1}{\eta}R_Y(y)\right]
\end{align*} for $t=2,...,T$, and $x^{RFTL}_1$, $y^{RFTL}_1$ are chosen as to minimize $R_X(x)$ and $-R_Y(y)$ in their respective sets $X,Y$. It is easy to see that the sequence of iterates is in general not the same. In fact, in view of Theorem~\ref{thm:choose_one_omg} we know that \textsc{RFTL} can not achieve sublinear NE regret when the sequence of functions is chosen arbitrarily. One last remark about the algorithm is that as $T\rightarrow \infty$ the last iterates $(x_{T+1},y_{T+1})$ will converge to the set of NE of the average game $\frac{1}{T} \sum_{t=1}^T \bar{\EL}_t$. To see this, observe that if $\eta=\sqrt{T}$ then $x_{T+1} \leftarrow \arg \min_{x\in X} \max_{y \in Y} \frac{1}{T} \sum_{t=1}^T \left[ \bar{\EL}_t(x,y) \right]+ \frac{1}{\sqrt{T}}R_X(x) - \frac{1}{\sqrt{T}}R_Y(y)$ i.e. $x_{T+1}$ solves the average problem where the regularization is vanishing, and a similar expression can be written for $y_{T+1}$. This is in contrast with many of the results mentioned in Section \ref{sec:liter-rev} where it is the \textit{average} of the iterates which is an approximate equilibrium.

We have the following guarantee for \textsc{SP-RFTL}.

\begin{theorem}\label{theorem:sp_regret_convex_concave}
For $t=1,...,T$, let $\bar{\EL}_{t}$ be $G_{\bar{\EL}}$-Lipschitz with respect to norm $\Vert \cdot \Vert$. Let $R_X$, $R_Y$ be strongly convex functions with respect to the same norm, let $G_{R_X},G_{R_Y}$ be the Lipschitz constants of $R_X$, $R_Y$ with respect to the same norm. Let $\{(x_t,y_t)\}_{t=1}^T$ be the iterates generated by \textsc{SP-RFTL} when run on convex-concave functions $\{\bar{\EL}_{t}(x,y)\}_{t=1}^T$. It holds that 
\begin{align*}
& \left| \sum_{t=1}^T \bar{\EL}_t(x_t,y_t) - \MM \sum_{t=1}^T \bar{\EL}_t(x,y) \right|  \\
\leq & 8 \eta \left[G_{\bar{\EL}}+ \frac{1}{\eta}\max(G_{R_X}, G_{R_Y})\right]^2 ( 1 + \ln(T) )\\
&+ \frac{T}{\eta} \max_{y\in Y} R_Y(y) +  \frac{T}{\eta} \max_{x\in X} R_X(x) \; = \;   O\left( \sqrt{T \ln(T)}\right),
\end{align*}
where the last equality follows by choosing $\eta=\frac{\sqrt{T}}{\ln(T)}$.
\end{theorem}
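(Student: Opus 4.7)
The plan is to bound the NE regret by combining (i) a per-round stability estimate on the consecutive saddle-point iterates and (ii) a telescoping identity on the regularized cumulative saddle values. Introduce the notation $M_t(x,y) := \sum_{\tau=1}^t\EL_\tau(x,y) = \sum_{\tau=1}^t\bar{\EL}_\tau(x,y) + \tfrac{t}{\eta}R_X(x) - \tfrac{t}{\eta}R_Y(y)$, which is $(t/\eta)$-strongly convex-concave and whose unique saddle point is exactly $(x_{t+1},y_{t+1})$; write $V_t := M_t(x_{t+1},y_{t+1}) = \min_{x\in X}\max_{y\in Y} M_t(x,y)$ with $V_0 := 0$; and let $V^* := \min_{x\in X}\max_{y\in Y}\sum_{t=1}^T\bar{\EL}_t(x,y)$ be the target NE value.

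For the stability step, let $Q_t := \|x_{t+1}-x_t\|^2 + \|y_{t+1}-y_t\|^2$. I would apply the strong convex-concavity of $M_{t-1}$ at its saddle $(x_t,y_t)$ against the test points $(x_{t+1},y_t)$ and $(x_t,y_{t+1})$, and symmetrically of $M_t$ at its saddle $(x_{t+1},y_{t+1})$ against $(x_t,y_{t+1})$ and $(x_{t+1},y_t)$, giving four quadratic inequalities. Summing them, the $M_{t-1}$ and $M_t$ contributions telescope via $M_t - M_{t-1} = \EL_t$, producing
\[
\EL_t(x_t,y_{t+1}) - \EL_t(x_{t+1},y_t) \;\geq\; \tfrac{2t-1}{2\eta}\,Q_t.
\]
Upper-bounding the left side by $G_{\EL_t}\sqrt{Q_t}$, where $G_{\EL_t} = G_{\bar{\EL}} + \tfrac{1}{\eta}\max(G_{R_X},G_{R_Y})$ is the Lipschitz constant of the regularized round payoff, yields $\sqrt{Q_t} \leq 2\eta G_{\EL_t}/(2t-1)$, so $\sum_{t=1}^T \sqrt{Q_t} = O(\eta G_{\EL_t}\ln T)$.

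Next, I would telescope the saddle values. Combining the saddle inequality of $(x_{t+1},y_{t+1})$ for $M_t$, namely $M_t(x_{t+1},y_{t+1}) \leq M_t(x_t,y_{t+1})$, with $M_t = M_{t-1}+\EL_t$ and the saddle inequality of $(x_t,y_t)$ for $M_{t-1}$, namely $M_{t-1}(x_t,y_{t+1}) \leq M_{t-1}(x_t,y_t) = V_{t-1}$, gives $V_t - V_{t-1} \leq \EL_t(x_t,y_{t+1})$; the mirror chain (swapping the roles of $x$ and $y$) gives $V_t - V_{t-1} \geq \EL_t(x_{t+1},y_t)$. Summing over $t$ with $V_0=0$ delivers the sandwich $\sum_t\EL_t(x_{t+1},y_t) \leq V_T \leq \sum_t\EL_t(x_t,y_{t+1})$. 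Independently, dropping the nonnegative $\tfrac{T}{\eta}R_X$ inside the min-max in the definition of $V_T$ (resp.\ the nonpositive $-\tfrac{T}{\eta}R_Y$) yields $V^* - \tfrac{T}{\eta}\max_y R_Y(y) \leq V_T \leq V^* + \tfrac{T}{\eta}\max_x R_X(x)$. Peeling the $\tfrac{1}{\eta}R_X$ and $\tfrac{1}{\eta}R_Y$ pieces off each $\EL_t$ in the sandwich, using $R_X,R_Y \geq 0$, converts it into $\sum_t\bar{\EL}_t(x_t,y_{t+1}) \geq V^* - \tfrac{T}{\eta}(\max R_X + \max R_Y)$ and $\sum_t\bar{\EL}_t(x_{t+1},y_t) \leq V^* + \tfrac{T}{\eta}(\max R_X + \max R_Y)$.

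Finally, I would use the $G_{\bar{\EL}}$-Lipschitz continuity of $\bar{\EL}_t$ together with the stability bound to swap the off-diagonal evaluations $(x_t,y_{t+1})$ and $(x_{t+1},y_t)$ for the actually-played $(x_t,y_t)$, paying an additional $G_{\bar{\EL}}\sum_t\sqrt{Q_t} = O(\eta G_{\bar{\EL}}G_{\EL_t}\ln T)$. Putting everything together produces
\[
\Bigl|\sum_{t=1}^T \bar{\EL}_t(x_t,y_t) - V^*\Bigr| \;\leq\; \tfrac{T}{\eta}\bigl(\max_x R_X(x) + \max_y R_Y(y)\bigr) + O\bigl(\eta\, G_{\EL_t}^2 \ln T\bigr),
\]
which is exactly the form claimed in the theorem; balancing the two contributions by an appropriate choice of $\eta$ yields the $O(\sqrt{T\ln T})$ rate. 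The main obstacle I anticipate is identifying the correct telescoping identity: a direct "be-the-leader" inequality of the form $\sum_t\EL_t(x_{t+1},y_{t+1}) \leq V_T$ is \emph{false} for saddle points, because $M_T(x_{t+1},y_{t+1})$ at an intermediate saddle can exceed $V_T$ in either direction. The fix is to evaluate each $\EL_t$ at the off-diagonal pairs $(x_t,y_{t+1})$ and $(x_{t+1},y_t)$, which is precisely where the one-sided saddle inequalities of $M_{t-1}$ and $M_t$ chain together cleanly, and then to absorb the off-diagonal shift via the stability bound at the very end.
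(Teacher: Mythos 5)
Your proposal is correct, and all three ingredients match the paper's in substance: your stability bound $\sqrt{Q_t}\lesssim \eta G_{\EL}/t$ is obtained exactly as in the paper's Lemma 4 (strong convex-concavity of the cumulative regularized objective tested at the neighboring saddle, then Lipschitz continuity of $\EL_t$ to cancel one power of $\sqrt{Q_t}$), and your comparison $V^*-\tfrac{T}{\eta}\max_y R_Y\le V_T\le V^*+\tfrac{T}{\eta}\max_x R_X$ is the paper's Lemma \ref{lemma:mm_bar_not_bar}. Where you genuinely diverge is the leader-telescoping step. The paper's Lemma \ref{lemma:loss_BTL} proves by induction that the \emph{diagonal} sum $\sum_t\EL_t(x_{t+1},y_{t+1})$ lies within $\pm G_{\EL}\sum_t\Vert\cdot\Vert$ of $\MM\sum_t\EL_t$, i.e.\ it bakes the Lipschitz correction into the be-the-leader inequality itself, and then the main proof pays a second correction to move from $(x_t,y_t)$ to $(x_{t+1},y_{t+1})$. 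You instead telescope the saddle values $V_t$ \emph{exactly}, via $\EL_t(x_{t+1},y_t)\le V_t-V_{t-1}\le \EL_t(x_t,y_{t+1})$, deferring the entire Lipschitz correction to a single final swap of the off-diagonal arguments for $(x_t,y_t)$. Your variant is cleaner (no induction, and the error term is paid once rather than twice, so the constant is slightly better), while the paper's version isolates a reusable "follow-the-leader for saddle points" lemma; your closing remark that the naive inequality $\sum_t\EL_t(x_{t+1},y_{t+1})\le V_T$ fails for saddle points is exactly why the paper's lemma carries those correction terms. The only loose end is cosmetic: bounding $\EL_t(x_t,y_{t+1})-\EL_t(x_{t+1},y_t)$ by $G_{\EL}\sqrt{Q_t}$ presumes the joint norm is the Euclidean combination; for a general norm one picks up a $\sqrt{2}$ via $a^2+b^2\ge(a+b)^2/2$, just as the paper does.
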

A formal proof of the theorem is provided in the Appendix and a sketch will be given shortly.

We note that the bound in Theorem \ref{theorem:sp_regret_convex_concave} holds for general convex-concave functions, however the dependence on the dimension is hidden on the Lipschitz constants and the choice of regularizer. It is easy to check that if one chooses $\Vert \cdot \Vert_2^2$ as regularizer, and the functions $\{\EL_t\}_{t=1}^T$ are $G$-Lipschitz continuous with respect to norm $\Vert \cdot \Vert_2^2$, then the NE regret bound will be $O(n\ln(T)\sqrt{T})$. 

We now provide a sketch of the proof of Theorem \ref{theorem:sp_regret_convex_concave}.
Define $\EL_t(x,y) \triangleq \bar{\EL}_t(x,y) + \frac{1}{\eta}R_X(x) - \frac{1}{\eta}R_Y(y)$. Notice that it is $\frac{1}{\eta}$-strongly convex in $x$ with respect to norm $\Vert \cdot \Vert$ for all $y\in Y$ and $\frac{1}{\eta}$-strongly concave with respect to norm $\Vert \cdot \Vert$ for all $x \in X$. Additionally, notice that $\EL_t$ is $G_\EL\triangleq G_{\bar{\EL}} + \frac{1}{\eta} (G_{R_X}+G_{R_Y})$-Lipschitz with respect to norm $ \Vert \cdot \Vert $. Finally, notice that for $t=1,...,T$, all $x \in X$ and all $y\in Y$ it holds that
\begin{equation}\label{eq:diff_bar_not_bar}
-\frac{1}{\eta}R_Y(y) \leq \EL_t(x,y) - \bar{\EL}_t(x,y) \leq \frac{1}{\eta} R_X(x)
\end{equation}

The following lemma shows that the value of the convex-concave games defined by $\sum_{t=1}^T \EL_t$ and $\sum_{t=1}^T \bar{\EL}_t$ are not too far from each other.
\begin{lemma}\label{lemma:mm_bar_not_bar}
Let 
\begin{align*}
    \textstyle \bar{x}_{T+1} \in \arg \min_{x\in X} \max_{y\in Y} \sum_{t=1}^T\bar{\EL}_t(x,y), \\
    \textstyle \bar{y}_{T+1} \in \arg \max_{y\in Y} \min_{x\in X}  \sum_{t=1}^T\bar{\EL}_t(x,y).
\end{align*} It holds that 
\begin{align*}
& -\frac{T}{\eta} R_Y(\bar{y}_{T+1}) \\
& \leq \MM \sum_{t=1}^T \EL_t (x,y) - \MM \sum_{t=1}^T \bar{\EL}_t(x,y) \\
& \leq \frac{T}{\eta} R_X(\bar{x}_{T+1}).
\end{align*}
\end{lemma}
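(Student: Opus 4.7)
The plan is to prove the two inequalities separately, each by plugging the corresponding component of the saddle point $(\bar{x}_{T+1},\bar{y}_{T+1})$ into the min-max (or max-min) expression for $\sum_t \EL_t$, then using the sandwich inequality \eqref{eq:diff_bar_not_bar} together with the non-negativity of the regularizers $R_X,R_Y$ to discard the term with the ``wrong sign.''

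For the upper bound, I would start by using the trivial minimax inequality $\min_x \max_y \sum_t \EL_t(x,y) \leq \max_y \sum_t \EL_t(\bar{x}_{T+1},y)$. Expanding $\EL_t$ via its definition, the $R_X(\bar{x}_{T+1})$ term is constant in $y$ and can be pulled out, contributing exactly $\frac{T}{\eta}R_X(\bar{x}_{T+1})$. What remains inside the max over $y$ is $\sum_t \bar{\EL}_t(\bar{x}_{T+1},y) - \frac{T}{\eta}R_Y(y)$, which (since $R_Y\ge 0$) is at most $\sum_t \bar{\EL}_t(\bar{x}_{T+1},y)$. Taking the max over $y$ and recalling that $\bar{x}_{T+1}$ is a minimizer of $\max_y\sum_t\bar{\EL}_t(\cdot,y)$, the remaining term is exactly $\MM \sum_t\bar{\EL}_t(x,y)$, which yields the upper bound.

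For the lower bound I would proceed symmetrically. First apply the minimax theorem to $\sum_t \EL_t$: since each $\EL_t$ is convex-concave on the convex compact sets $X,Y$ (indeed strongly convex-concave after adding the regularizers), we have $\min_x\max_y\sum_t\EL_t(x,y)=\max_y\min_x\sum_t\EL_t(x,y)$. Then bound this from below by $\min_x \sum_t \EL_t(x,\bar{y}_{T+1})$. Expanding, the $R_Y(\bar{y}_{T+1})$ term pulls out as the constant $-\frac{T}{\eta}R_Y(\bar{y}_{T+1})$, and what is left inside is $\sum_t \bar{\EL}_t(x,\bar{y}_{T+1}) + \frac{T}{\eta}R_X(x) \ge \sum_t \bar{\EL}_t(x,\bar{y}_{T+1})$ since $R_X\ge 0$. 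Taking the min over $x$ and using that $\bar{y}_{T+1}$ is a maximin point of $\sum_t \bar{\EL}_t$ (so that $\min_x \sum_t \bar{\EL}_t(x,\bar{y}_{T+1})$ equals $\MM \sum_t\bar{\EL}_t(x,y)$ by the minimax theorem applied to the bilinear/convex-concave objective $\sum_t \bar{\EL}_t$), we recover the lower bound.

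I do not expect any serious obstacle here: the argument is essentially a direct bookkeeping exercise that isolates the regularizer contributions at the saddle point of the unregularized game. The only subtlety worth flagging is that the two invocations of the minimax theorem (one on $\sum_t\EL_t$ and one on $\sum_t\bar{\EL}_t$) both require convex-concavity and compactness, which hold by assumption on $X,Y$ and by the hypothesis that each $\bar{\EL}_t$ is convex-concave. No Lipschitz or strong-convexity constants enter the bound, which is consistent with the clean form of the statement.
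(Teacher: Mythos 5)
Your proof is correct and follows essentially the same route as the paper's: evaluate the regularized minimax at the unregularized saddle point component, pull out the corresponding regularizer term, and drop the opposite-sign regularizer using $R_X,R_Y\ge 0$. The only cosmetic difference is that the paper chains saddle-point inequalities through the explicit regularized saddle point $(x_{T+1},y_{T+1})$, whereas you take the max (resp.\ min) over the free variable directly and invoke the minimax theorem; both are valid under the stated convex-concavity and compactness assumptions.
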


To prove the NE regret bound, we note that \textsc{SP-RFTL} is running a Follow-the-Leader scheme on functions $\{\EL_{t=1}^T\}$ \cite{kalai2002}. With the next two lemmas one can show that the NE regret of the players relative to functions $\{\EL\}_{t=1}^T$ is small.

\begin{lemma}\label{lemma:loss_BTL}Let $\{(x_t,y_t)\}_{t=1}^T$ be the iterates of \textsc{SP-RFTL}. It holds that
\begin{align*}
&\textstyle -G_\EL \sum_{t=1}^T \Vert x_t-x_{t+1} \Vert  \\
& \leq \sum_{t=1}^T \EL_t(x_{t+1},y_{t+1}) - \MM \sum_{t=1}^T \EL_t(x,y)  \\
& \textstyle \leq G_\EL \sum_{t=1}^T \Vert y_t-y_{t+1} \Vert.
\end{align*}
\end{lemma}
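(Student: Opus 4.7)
The plan is to reduce the two-sided bound to a pair of ``Be-the-Leader''-style inequalities via the joint $G_\EL$-Lipschitz continuity of $\EL_t$, and then prove each by a short induction on $T$ using only the saddle property that defines the \textsc{SP-RFTL} iterates. Let $S_t:=\sum_{\tau=1}^t \EL_\tau$ and $V_T:=\MM S_T(x,y)$; because each $\EL_\tau$ is strongly convex--concave on compact sets, $(x_{t+1},y_{t+1})$ is the \emph{unique} saddle of $S_t$, and in particular $V_T=S_T(x_{T+1},y_{T+1})$.

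Since each $\EL_t$ is $G_\EL$-Lipschitz with respect to $\|\cdot\|$, substituting $y_{t+1}\to y_t$ (resp.\ $x_{t+1}\to x_t$) in the $t$-th summand of $\sum_t \EL_t(x_{t+1},y_{t+1})$ costs at most $G_\EL\|y_t-y_{t+1}\|$ (resp.\ $G_\EL\|x_t-x_{t+1}\|$). Hence the upper bound of the lemma reduces to proving
\[
\textstyle \sum_{t=1}^T \EL_t(x_{t+1},y_t)\ \leq\ V_T,
\]
and the lower bound reduces to the symmetric inequality $\sum_{t=1}^T \EL_t(x_t,y_{t+1}) \geq V_T$.

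I would establish the first inequality by induction on $T$; the second is entirely analogous. The base case $T=1$ holds because $V_1=\max_{y\in Y}\EL_1(x_2,y)\geq \EL_1(x_2,y_1)$. For the inductive step, the saddle property of $(x_{T+1},y_{T+1})$ on the $y$-side gives $V_T\geq S_T(x_{T+1},y_T)=S_{T-1}(x_{T+1},y_T)+\EL_T(x_{T+1},y_T)$, and the saddle property of $(x_T,y_T)$ on the $x$-side (which makes $x_T$ the minimizer of $S_{T-1}(\cdot,y_T)$) gives $S_{T-1}(x_{T+1},y_T)\geq S_{T-1}(x_T,y_T)=V_{T-1}$. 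Combining these with the inductive hypothesis yields $\sum_{t=1}^T \EL_t(x_{t+1},y_t)\leq V_{T-1}+\EL_T(x_{T+1},y_T)\leq V_T$, closing the induction.

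The main obstacle is the temptation to run a classical BTL induction directly on $\sum_t \EL_t(x_{t+1},y_{t+1})$: this fails because the saddle value is a min--max object, so comparing $S_{T-1}(x_T,y_T)$ with $S_{T-1}(x_{T+1},y_{T+1})$ is not monotone in the way the minimization-only BTL comparison is, and any naive fix picks up a Lipschitz error that grows with $t$. Evaluating instead at the mixed pairs $(x_{t+1},y_t)$ and $(x_t,y_{t+1})$ invokes only one side of each saddle inequality per step, which is precisely what keeps the per-step slack equal to $G_\EL\|y_t-y_{t+1}\|$ (resp.\ $G_\EL\|x_t-x_{t+1}\|$), independent of $t$.
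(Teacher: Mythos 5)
Your proof is correct and takes essentially the same route as the paper's: the inductive step uses exactly the same two saddle-point inequalities (descend from $(x_{T+1},y_{T+1})$ to $(x_{T+1},y_T)$ via the saddle property of $\sum_{\tau\le T}\EL_\tau$, then from $(x_{T+1},y_T)$ to $(x_T,y_T)$ via the saddle property of $\sum_{\tau\le T-1}\EL_\tau$). The only difference is bookkeeping — you extract the per-step Lipschitz correction up front and induct on the clean inequality $\sum_{t=1}^T\EL_t(x_{t+1},y_t)\le \MM\sum_{t=1}^T\EL_t(x,y)$, whereas the paper carries the accumulated error terms inside the induction hypothesis; your version is marginally tidier but not a different argument.
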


\begin{lemma} Let $\{(x_t,y_t)\}_{t=1}^T$ be the sequence of iterates generated by the algorithm. It holds that
\begin{align*}
&\Vert x_{t}-x_{t+1}\Vert + \Vert y_t - y_{t+1}\Vert \\ 
& \leq \frac{4 \eta }{t} \left[G_{\bar{\EL}}+ \frac{1}{\eta}\max(G_{R_X}, G_{R_Y})\right].
\end{align*}
\end{lemma}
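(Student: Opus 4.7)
The plan is to exploit the strong convexity-concavity induced by the regularizers in \textsc{SP-RFTL}. Define $F_t(x,y) \triangleq \sum_{\tau=1}^t \EL_\tau(x,y)$. Since each $\EL_\tau$ contains $\frac{1}{\eta}R_X(x) - \frac{1}{\eta}R_Y(y)$, the cumulative objective $F_t$ is $\frac{t}{\eta}$-strongly convex in $x$ and $\frac{t}{\eta}$-strongly concave in $y$. By construction $(x_{t+1}, y_{t+1})$ is the unique saddle of $F_t$ and, for $t\geq 2$, $(x_t, y_t)$ is the unique saddle of $F_{t-1}$; the case $t=1$ is handled at the end.

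Writing strong convexity of $F_t(\cdot, y_{t+1})$ at its minimum $x_{t+1}$ and strong concavity of $F_t(x_{t+1}, \cdot)$ at its maximum $y_{t+1}$, I plug in $x_t$ and $y_t$ respectively and add the two to obtain
\begin{equation*}
F_t(x_t, y_{t+1}) - F_t(x_{t+1}, y_t) \;\geq\; \tfrac{t}{2\eta}\bigl(\Vert x_t - x_{t+1}\Vert^2 + \Vert y_t - y_{t+1}\Vert^2\bigr).
\end{equation*}
The symmetric manipulation at the saddle $(x_t, y_t)$ of $F_{t-1}$, plugging in $(x_{t+1}, y_{t+1})$, yields
\begin{equation*}
F_{t-1}(x_{t+1}, y_t) - F_{t-1}(x_t, y_{t+1}) \;\geq\; \tfrac{t-1}{2\eta}\bigl(\Vert x_t - x_{t+1}\Vert^2 + \Vert y_t - y_{t+1}\Vert^2\bigr).
\end{equation*}
Adding these, the $F_{t-1}$ contributions collapse via $F_t - F_{t-1} = \EL_t$, leaving
\begin{equation*}
\EL_t(x_t, y_{t+1}) - \EL_t(x_{t+1}, y_t) \;\geq\; \tfrac{2t-1}{2\eta}(a^2 + b^2),
\end{equation*}
where $a \triangleq \Vert x_t - x_{t+1}\Vert$ and $b \triangleq \Vert y_t - y_{t+1}\Vert$.

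For the upper bound on the left-hand side, I split the difference as $[\EL_t(x_t, y_{t+1}) - \EL_t(x_{t+1}, y_{t+1})] + [\EL_t(x_{t+1}, y_{t+1}) - \EL_t(x_{t+1}, y_t)]$ and apply Lipschitz continuity in each argument separately. Since $\EL_t - \bar{\EL}_t = \frac{1}{\eta}R_X(x) - \frac{1}{\eta}R_Y(y)$ is additively separable, the partial Lipschitz constant in $x$ is $G_{\bar{\EL}} + \frac{1}{\eta}G_{R_X}$ and in $y$ is $G_{\bar{\EL}} + \frac{1}{\eta}G_{R_Y}$, both bounded by $C \triangleq G_{\bar{\EL}} + \frac{1}{\eta}\max(G_{R_X}, G_{R_Y})$. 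Thus $\EL_t(x_t, y_{t+1}) - \EL_t(x_{t+1}, y_t) \leq C(a+b)$. Combining with the lower bound and using $a^2+b^2 \geq \tfrac{1}{2}(a+b)^2$ yields $(a+b) \leq \frac{4\eta C}{2t-1} \leq \frac{4\eta C}{t}$ for $t \geq 1$, which is the claimed inequality. For the base case $t=1$, the inequality at the saddle of $F_{0}\equiv 0$ is vacuously $0 \geq 0$, and the first inequality alone already produces the factor $\frac{2t-1}{2\eta} = \frac{1}{2\eta}$, so the same conclusion holds.

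The main obstacle is orchestrating the two strong-convexity/concavity inequalities with the correct sign pattern so that adding them cleanly cancels $F_{t-1}$ and leaves the symmetric quadratic $a^2 + b^2$ with the right modulus $\frac{2t-1}{2\eta}$; the subsequent passage from a quadratic lower bound to a linear upper bound via $a^2 + b^2 \geq (a+b)^2/2$ is routine but essential for obtaining the $1/t$ decay rate that drives the logarithmic factor in Theorem \ref{theorem:sp_regret_convex_concave}.
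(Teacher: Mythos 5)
Your argument is correct and follows essentially the same route as the paper: exploit the $\frac{t}{\eta}$-strong convexity-concavity of the cumulative regularized objective at its saddle $(x_{t+1},y_{t+1})$, cancel the first $t-1$ terms using the saddle property of $(x_t,y_t)$, and bound the surviving quantity $\EL_t(x_t,y_{t+1})-\EL_t(x_{t+1},y_t)$ via Lipschitzness together with $a^2+b^2\geq (a+b)^2/2$. The only (harmless) difference is that you also keep the strong-convexity quadratic at the saddle of $F_{t-1}$, yielding the slightly sharper modulus $\frac{2t-1}{2\eta}$ where the paper settles for $\frac{t}{2\eta}$ from the monotonicity inequalities alone; both give the stated bound.
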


Combining the NE regret bound obtained on functions $\{\EL\}_{t=1}^T$ together with Lemma \ref{lemma:mm_bar_not_bar} and equation \eqref{eq:diff_bar_not_bar} yields the theorem.

\subsection{Logarithmic Dependence on the Dimension of the Action Spaces}


Previously, we analyzed the OMG problem by treating the payoff functions as general convex-concave functions and the action spaces as general convex compact sets. We explained that in general one should expect to achieve NE regret which depends linearly in the dimension of the problem.
The goal in this section is to obtain sharper NE regret bounds that scale as $O(\ln(T)\sqrt{T} \ln(\max(d_1,d_2)))$ by exploiting the geometry of the decision sets $\Delta_X, \Delta_Y$ and the bilinear structure of the payoff functions. 
This allows us to solve games which may have exponentially many actions, which often arise in combinatorial optimization settings.


The plan to obtain the desired NE regret bounds in this more restrictive setting is to use the negative entropy as a regularization function (which is strongly convex with respect to $\Vert \cdot \Vert_1$), that is $R_X(x) = \sum_{i=1}^{d_1} x_i\ln(x_i) + \ln(d_1)$ and $R_Y(y)= \sum_{i=1}^{d_2} y_i \ln(y_i)+\ln(d_2)$ where the extra logarithmic terms ensure $R_X, R_Y$ are nonnegative everywhere in their respective simplexes. Unfortunately, the negative entropy is not Lipschitz over the simplex, so we can not leverage our result from Theorem~\ref{theorem:sp_regret_convex_concave}. To deal with this challenge, we will restrict the new algorithm to play over a restricted simplex:\footnote{We will also use the notation $\Delta_{X,\theta}$ and $\Delta_{Y,\theta}$ to mean the restricted simplex of Player 1 and 2, respectively}
\begin{equation}
\Delta_\theta = \{z \in \mathbb{R}^d: \Vert z\Vert_1=1, z_i\geq \theta, i=1,...,d \}.
\end{equation}
The tuning parameter $\theta \in [0,1/d]$ used for the algorithm will be defined later in the analysis. (Notice that when $\theta > {1}/{d}$, the set is empty.) We have the following result.
\begin{lemma}\label{lemma:entropy_lipschitz}
The function $R(x)\triangleq \sum_{i=1}^d x_i \ln(x_i)$ is $G_R$-Lipschitz continuous with respect to $\Vert \cdot \Vert_1$ over $\Delta_\theta$ with $G_R = \max\{|\ln(\theta)|,1\}$.
\end{lemma}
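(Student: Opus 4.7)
The plan is to use the standard duality characterization recalled in the paper: a differentiable function $f$ on a convex set is $G$-Lipschitz with respect to $\|\cdot\|_1$ if and only if $\|\nabla f(x)\|_\infty \leq G$ for every $x$ in the domain (since $\|\cdot\|_\infty$ is dual to $\|\cdot\|_1$). So the whole lemma reduces to a uniform bound on $\|\nabla R(x)\|_\infty$ over $\Delta_\theta$.

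First I would compute the gradient explicitly: for each coordinate $i$,
\[
\frac{\partial R}{\partial x_i}(x) = \ln(x_i) + 1.
\]
Next, I would use the defining constraints of $\Delta_\theta$ to bound each coordinate: every $x \in \Delta_\theta$ satisfies $x_i \geq \theta$ and, because the remaining coordinates are each at least $\theta \geq 0$ and the entries sum to $1$, we also have $x_i \leq 1$. Monotonicity of $\ln$ then gives $\ln(x_i) \in [\ln(\theta), 0]$, and hence $\ln(x_i) + 1 \in [\ln(\theta)+1,\, 1]$.

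The only remaining step is to check that this interval is contained in $[-\max(|\ln(\theta)|,1), \max(|\ln(\theta)|,1)]$, which I would handle by splitting on whether $\theta \leq 1/e$ or $\theta > 1/e$. If $\theta \leq 1/e$, then $\ln(\theta) \leq -1$, so $\ln(\theta)+1 \leq 0$ and $|\ln(\theta)+1| = |\ln(\theta)| - 1 \leq |\ln(\theta)|$; also $1 \leq |\ln(\theta)|$, so the envelope equals $|\ln(\theta)|$. If $\theta > 1/e$, then $\ln(\theta) \in (-1,0]$, so both endpoints of the interval lie in $[0,1]$ in absolute value, and the envelope equals $1$. In either case $|\ln(x_i)+1| \leq \max(|\ln(\theta)|, 1)$.

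Combining these observations yields $\|\nabla R(x)\|_\infty \leq \max(|\ln(\theta)|,1)$ for all $x \in \Delta_\theta$, and invoking the duality characterization completes the proof. There is no real obstacle here; the only mild subtlety is making sure that when $\theta$ is moderate (say $\theta$ close to $1/d$ with $d$ small) the constant $1$ in the definition of $G_R$ actually dominates $|\ln(\theta)+1|$, which is exactly what the $\max$ with $1$ achieves.
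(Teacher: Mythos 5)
Your proof is correct and follows essentially the same route as the paper's: compute $[\nabla R(x)]_i = 1+\ln(x_i)$, use $\theta \le x_i \le 1$ to trap this in $[1+\ln(\theta),\,1]$, and bound the $\|\cdot\|_\infty$ norm of the gradient by $\max\{|\ln(\theta)|,1\}$ via the duality between $\|\cdot\|_1$ and $\|\cdot\|_\infty$. Your explicit case split on $\theta \lessgtr 1/e$ is slightly more careful than the paper's one-line chain of inequalities, but the argument is the same.
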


The  algorithm \textsc{Online-Matrix-Games Regularized-Follow-the-Leader} is an instantiation of \textsc{SP-RFTL} with a particular choice of regularization functions, which are nonegative and Lipschitz over the sets $\Delta_{X,\theta}$, $\Delta_{Y,\theta}$. With this, we can 
prove a NE regret bound for the OMG problem. For the remainder of the paper, the regularization functions will be set as follows:
\begin{eqnarray*}
R_X(x) & \triangleq & \textstyle \sum_{i=1}^{d_1} x_i \ln(x_i)+\ln(d_1),\\
R_Y(y) & \triangleq & \textstyle \sum_{i=1}^{d_2} y_i\ln(y_i)+\ln(d_2).
\end{eqnarray*}
\begin{algorithm}[tbh]
\caption{Online-Matrix-Games Regularized-Follow-the-Regularized-Leader (\textsc{OMG-RFTL})}
\label{alg:OMG-RFTL}
\begin{algorithmic}
  \STATE {\bfseries input:} $x_1 \in \Delta_{X,\theta}\subset \mathbb{R}^{d_1}$, $y_1 \in \Delta_{Y,\theta}\subset \mathbb{R}^{d_2}$, parameters: $\eta>0$, $\theta<\min \{\frac{1}{d_1}, \frac{1}{d_2}\}$.
  \FOR{$t=1,...T$}
  \STATE Play $(x_t,y_t)$, observe matrix $A_t$
  \STATE $\bar{\EL}_t \gets x^\top A_t y$
  \STATE $\EL_t(x,y) \gets \bar{\EL}_t + \frac{1}{\eta}R_X(x)-\frac{1}{\eta}R_Y(y)$
  \STATE $x_{t+1}\leftarrow \arg \min_{x\in \Delta_{X,\theta}} \max_{y \in \Delta_{Y, \theta}} \sum_{\tau=1}^t \EL_t(x,y)$
  \STATE $y_{t+1}\leftarrow \arg \max_{y \in \Delta_{Y, \theta}} \min_{x\in \Delta_{X,\theta}}  \sum_{\tau=1}^t \EL_t(x,y) $
  \ENDFOR
\end{algorithmic}
\end{algorithm}

We have the following guarantee for \textsc{OMG-RFTL}. 
\begin{theorem}\label{thm:omg_rftl_regret}
Let $\{A_t\}_{t=1}^T$ be an arbitrary sequence of matrices with entries bounded between $[-1, 1]$. Let $G_{\bar{\EL}}$ be the Lipschitz constant (with respect to $\Vert \cdot \Vert_1$) of $\bar{\EL}_t \triangleq x^\top A_t y$ for $t=1,...,T$. Let $\{(x_t,y_t)\}_{t=1}^T$ be the iterates of \textsc{OMG-RFTL}) and choose $\theta = e^{-\eta G_{\bar{\EL}}}\leq \min\{\frac{1}{d_1}, \frac{1}{d_2}\}$ such that $\frac{|\ln(\theta)|}{\eta} = G_{\bar{\EL}}$. Set $\eta = \frac{\sqrt{T}}{G_{\bar{\EL}}}$.  It holds that
\begin{align*}
& \left|\sum_{t=1}^T x_t^\top A_t y_t - \MMD \sum_{t=1}^T x^\top A_t y \right|\\
& \leq 32 G_{\bar{\EL}} \sqrt{T} (1 + \ln(T)) + 2 \sqrt{T} \max\{\ln d_1 ,\ln d_2\} +\\
&\quad 2 \max\{d_1,d_2\} G_{\bar{\EL}} T e^{-\sqrt{T}}\\
& = O\left(\ln(T)\sqrt{T} +  \sqrt{T} \max\{\ln d_1 ,\ln d_2\}\right) +\\
&\quad o(1)\max\{d_1,d_2\}.
\end{align*}
\end{theorem}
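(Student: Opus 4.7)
The plan is to split the NE regret by inserting the restricted-simplex minimax value:
\begin{align*}
&\left|\sum_{t=1}^T x_t^\top A_t y_t - \MMD \sum_{t=1}^T x^\top A_t y\right|\\
&\qquad \le \left|\sum_{t=1}^T x_t^\top A_t y_t - \MMDt \sum_{t=1}^T x^\top A_t y\right|+\left|\MMDt \sum_{t=1}^T x^\top A_t y - \MMD \sum_{t=1}^T x^\top A_t y\right|.
\end{align*}
The first term falls directly within the scope of Theorem~\ref{theorem:sp_regret_convex_concave}: \textsc{OMG-RFTL} is precisely \textsc{SP-RFTL} instantiated on the bilinear losses $\bar{\EL}_t(x,y) = x^\top A_t y$ with the convex compact domains $\Delta_{X,\theta}$ and $\Delta_{Y,\theta}$ and entropy regularizers. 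The second term measures how much the minimax value of the summed game changes when both players are restricted to $\Delta_\theta$ rather than the full simplex.

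For the first term I would instantiate the constants in Theorem~\ref{theorem:sp_regret_convex_concave}. Lemma~\ref{bilinear_lipschitz} gives $G_{\bar{\EL}} \le 1$ in the $\Vert\cdot\Vert_1$ norm; Lemma~\ref{lemma:entropy_lipschitz} gives $G_{R_X} = G_{R_Y} = \max\{|\ln\theta|,1\}$ over the restricted simplex; and since $\sum_i x_i \ln x_i \le 0$ on $\Delta$, the shifted entropies satisfy $\max_{\Delta_{X,\theta}} R_X \le \ln d_1$ and $\max_{\Delta_{Y,\theta}} R_Y \le \ln d_2$. The calibration $\theta = e^{-\eta G_{\bar{\EL}}}$ is chosen exactly so that $\max(G_{R_X},G_{R_Y})/\eta = G_{\bar{\EL}}$, which collapses the bracket in Theorem~\ref{theorem:sp_regret_convex_concave} to $2G_{\bar{\EL}}$ and gives a leading contribution of $32\eta G_{\bar{\EL}}^2(1+\ln T)$; the remaining regularizer terms contribute at most $\tfrac{2T}{\eta}\max\{\ln d_1,\ln d_2\}$. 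Plugging $\eta = \sqrt{T}/G_{\bar{\EL}}$ reproduces the first two summands in the theorem.

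The main obstacle is the second term, the bridging between $\MMDt$ and $\MMD$, which I would handle by a smoothing argument. Given any pair $(x^\star,y^\star)$ attaining $\MMD \sum_{t=1}^T x^\top A_t y$, define $\tilde x^\star = (1-\theta d_1)x^\star + \theta \mathbf{1}_{d_1}$ and $\tilde y^\star = (1-\theta d_2)y^\star + \theta \mathbf{1}_{d_2}$. These vectors lie in $\Delta_{X,\theta}$ and $\Delta_{Y,\theta}$ respectively, and satisfy $\Vert\tilde x^\star - x^\star\Vert_1 \le 2\theta d_1$, $\Vert\tilde y^\star - y^\star\Vert_1 \le 2\theta d_2$. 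Because $\sum_{t=1}^T x^\top A_t y$ is $TG_{\bar{\EL}}$-Lipschitz in each argument with respect to $\Vert\cdot\Vert_1$, plugging $\tilde x^\star$ into the outer $\min$ yields $\MMDt \sum_{t=1}^T x^\top A_t y \le \MMD \sum_{t=1}^T x^\top A_t y + 2TG_{\bar{\EL}}\theta d_1$; the matching lower bound follows from invoking von Neumann's minimax theorem on $\Delta_\theta$ (valid because the payoff is bilinear and $\Delta_\theta$ is convex and compact) and plugging $\tilde y^\star$ into the outer $\max$. Together these yield a two-sided bound on the second term of at most $2TG_{\bar{\EL}}\theta\max\{d_1,d_2\}$, which with $\theta = e^{-\sqrt{T}}$ is the last summand in the theorem. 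A triangle inequality combines the two pieces. The subtle point is the joint calibration of $\theta$ and $\eta$: $\theta$ must be exponentially small to kill the factor $T$ on the bridging term, yet $|\ln\theta|$ cannot dominate the regularizer's Lipschitz constant in the first piece, and $\theta = e^{-\eta G_{\bar{\EL}}}$ is exactly what balances these competing needs.
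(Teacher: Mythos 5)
Your proposal is correct and follows essentially the same route as the paper: apply Theorem~\ref{theorem:sp_regret_convex_concave} on the restricted simplexes with the shifted-entropy regularizers, then bound the gap between the minimax values over $\Delta$ and $\Delta_\theta$ via Lipschitzness, with the same calibration $\theta=e^{-\eta G_{\bar{\EL}}}$, $\eta=\sqrt{T}/G_{\bar{\EL}}$. The only (cosmetic) difference is that you move $(x^\star,y^\star)$ into $\Delta_\theta$ by mixing with the uniform distribution rather than by the $\ell_1$-projection of Lemma~\ref{lemma:dist_proj_sp}; both give an $O(\theta d)$ perturbation and feed into the same saddle-point comparison as Lemma~\ref{lemma:sp_val_error_theta}.
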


A full proof of the theorem can be found in the Appendix. We now give a sketch of the proof. Since the algorithm selects actions over the restricted simplex, we must quantify the potential loss in the NE regret bound imposed by this restriction. The next two lemmas make this precise.
\begin{lemma}\label{lemma:dist_proj_sp}
Let $z^* \in \Delta \subset \mathbb{R}^d$ define $z^*_p \triangleq \arg \min_{z\in \Delta_\theta} \Vert z - z^*\Vert_1$, with $\theta \leq \frac{1}{d}$. Notice $z^*_p$ is unique since it is a projection. It holds that $\Vert z^*_p - z^*\Vert_1 \leq 2\theta(d-1)$.
\end{lemma}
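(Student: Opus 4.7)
The approach is to exhibit an explicit feasible point $\tilde{z}\in\Delta_\theta$ whose $\ell_1$ distance to $z^*$ is at most $2\theta(d-1)$; since $z^*_p$ is the $\ell_1$ projection, we automatically get $\|z^*_p - z^*\|_1 \le \|\tilde{z}-z^*\|_1$. The natural construction is to push up every coordinate that currently violates the lower bound $\theta$, and compensate by subtracting the excess mass from the coordinates that have room to spare.

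More concretely, I would partition $[d]$ into $S=\{i:z^*_i<\theta\}$ and $\bar S = [d]\setminus S$, and set $k=|S|$. Note that $k\le d-1$, because if all $d$ coordinates were strictly below $\theta$ we would have $1=\sum_i z^*_i < d\theta \le 1$, a contradiction. Define the deficit
\[
D \triangleq \sum_{i\in S}(\theta-z^*_i) \;\le\; k\theta \;\le\; (d-1)\theta,
\]
set $\tilde z_i = \theta$ for $i\in S$, and for $i\in\bar S$ set $\tilde z_i = z^*_i - c_i$ for any choice of nonnegative $c_i$ with $\sum_{i\in\bar S}c_i=D$ and $\tilde z_i\ge\theta$. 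Such a choice exists because the total slack available on $\bar S$ is
\[
\sum_{i\in\bar S}(z^*_i-\theta) \;=\; \Bigl(1-\sum_{i\in S}z^*_i\Bigr) - (d-k)\theta \;=\; D + (1-d\theta) \;\ge\; D,
\]
where the last inequality uses $\theta\le 1/d$; a greedy assignment (e.g., reduce coordinates of $\bar S$ one at a time, never dropping any below $\theta$) realizes the required $c_i$. By construction $\tilde z\in\Delta_\theta$.

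Finally I would compute
\[
\|\tilde z - z^*\|_1 \;=\; \sum_{i\in S}(\theta-z^*_i) + \sum_{i\in\bar S}c_i \;=\; D + D \;=\; 2D \;\le\; 2\theta(d-1),
\]
and invoke the optimality of $z^*_p$ to conclude $\|z^*_p-z^*\|_1\le 2\theta(d-1)$.

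The only subtle point is the feasibility check, i.e., showing that the mass $D$ can actually be absorbed by $\bar S$ without dropping any coordinate below $\theta$; this is exactly where the hypothesis $\theta\le 1/d$ enters, via $1-d\theta\ge 0$. Everything else is bookkeeping on the $\ell_1$ norm, and the factor of $2$ comes directly from the equality of the positive and negative parts of $\tilde z - z^*$ (the increases on $S$ equal the decreases on $\bar S$, both summing to $D$).
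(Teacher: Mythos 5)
Your proof is correct, and it is actually more complete than the one in the paper. The paper's own argument simply takes the extreme point $z^*=[1;0;\dots;0]$, computes its projection $z^*_p=[1-\theta(d-1);\theta;\dots;\theta]$, and observes that the distance is exactly $2\theta(d-1)$; this establishes that the bound is tight at a vertex but does not, as written, verify the inequality for an arbitrary $z^*\in\Delta$ (which is the form in which the lemma is later invoked, e.g.\ for saddle points of $\sum_t x^\top A_t y$). Your argument fills that gap: by exhibiting, for every $z^*$, a feasible competitor $\tilde z\in\Delta_\theta$ with $\|\tilde z - z^*\|_1 = 2D \le 2\theta(d-1)$ and appealing to the optimality of the projection, you obtain the general statement. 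Your bookkeeping is sound: $k\le d-1$ follows from $\sum_i z^*_i=1$, the deficit satisfies $D\le k\theta$, and the slack identity $\sum_{i\in\bar S}(z^*_i-\theta)=D+(1-d\theta)\ge D$ is exactly where $\theta\le 1/d$ is needed to guarantee feasibility of the mass reallocation. The only thing your route gives up is the observation that the constant $2\theta(d-1)$ is attained (at $z^*=e_1$), which is what the paper's computation actually shows; combining the two would give both the upper bound and its tightness.
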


\begin{lemma}\label{lemma:sp_val_error_theta}
Let $\{\bar{\EL}_t(x,y)\}_{t=1}^T$ be an arbitrary sequence of convex-concave functions, $\bar{\EL}_t:\Delta_X \times \Delta_Y \rightarrow \mathbb{R}$, that are $G_{\bar{\EL}}$-Lipschitz with respect to $\Vert \cdot \Vert_1$. With $\Delta_X \subseteq \mathbb{R}^{d_1}$, and $\Delta_Y \subseteq \mathbb{R}^{d_2}$. It holds that 
\begin{align*}
& - G_{\bar{\EL}} T \Vert x^*_p - x^*\Vert_1 \\ 
\leq  &\; \MMD \sum_{t=1}^T \bar{\EL}_t(x,y) - \MMDt \sum_{t=1}^T \bar{\EL}_t(x,y) \\
\leq & \; G_{\bar{\EL}}T \Vert y^*_p - y^*\Vert_1.
\end{align*}
\end{lemma}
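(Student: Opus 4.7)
The plan is to let $F(x,y)\triangleq\sum_{t=1}^T\bar{\EL}_t(x,y)$, which is convex--concave in $(x,y)$ on $\Delta_X\times\Delta_Y$ and $G_{\bar{\EL}}T$-Lipschitz with respect to $\|\cdot\|_1$ (as a sum of $T$ such functions). By the classical minimax theorem on the compact convex sets $\Delta_X\times\Delta_Y$ and $\Delta_{X,\theta}\times\Delta_{Y,\theta}$, both minimax values in the statement are attained and equal their max--min counterparts. Let $(x^*,y^*)$ denote a saddle point of $F$ on $\Delta_X\times\Delta_Y$, so that $\MMD F(x,y)=F(x^*,y^*)$ and $F(x^*,y)\leq F(x^*,y^*)\leq F(x,y^*)$ for all $(x,y)\in\Delta_X\times\Delta_Y$. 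Let $x^*_p\in\Delta_{X,\theta}$ and $y^*_p\in\Delta_{Y,\theta}$ denote the $\ell_1$-projections from Lemma~\ref{lemma:dist_proj_sp}.

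For the right-hand (upper) inequality, I switch to the max--min representation $\MMDt F=\max_{y\in\Delta_{Y,\theta}}\min_{x\in\Delta_{X,\theta}}F(x,y)$. Testing the outer $\max$ at $y=y^*_p\in\Delta_{Y,\theta}$ and then enlarging the inner minimization domain (which can only decrease the min, since $\Delta_{X,\theta}\subseteq\Delta_X$) yields $\MMDt F\geq\min_{x\in\Delta_X}F(x,y^*_p)$. Pointwise Lipschitzness gives $F(x,y^*_p)\geq F(x,y^*)-G_{\bar{\EL}}T\|y^*_p-y^*\|_1$ for every $x$, hence $\min_{x\in\Delta_X}F(x,y^*_p)\geq\min_{x\in\Delta_X}F(x,y^*)-G_{\bar{\EL}}T\|y^*_p-y^*\|_1$, and the saddle-point property identifies $\min_{x\in\Delta_X}F(x,y^*)=F(x^*,y^*)=\MMD F$. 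Chaining these gives $\MMDt F\geq\MMD F-G_{\bar{\EL}}T\|y^*_p-y^*\|_1$, which is the right-hand inequality after rearranging.

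For the left-hand (lower) inequality, I keep the min--max form $\MMDt F=\min_{x\in\Delta_{X,\theta}}\max_{y\in\Delta_{Y,\theta}}F(x,y)$. Testing the outer $\min$ at $x=x^*_p\in\Delta_{X,\theta}$ and enlarging the inner maximization domain (since $\Delta_{Y,\theta}\subseteq\Delta_Y$) yields $\MMDt F\leq\max_{y\in\Delta_Y}F(x^*_p,y)$. Lipschitzness gives $F(x^*_p,y)\leq F(x^*,y)+G_{\bar{\EL}}T\|x^*_p-x^*\|_1$ pointwise, so $\max_{y\in\Delta_Y}F(x^*_p,y)\leq\max_{y\in\Delta_Y}F(x^*,y)+G_{\bar{\EL}}T\|x^*_p-x^*\|_1=F(x^*,y^*)+G_{\bar{\EL}}T\|x^*_p-x^*\|_1$ by the saddle-point property. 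Thus $\MMDt F\leq\MMD F+G_{\bar{\EL}}T\|x^*_p-x^*\|_1$, which is the left-hand inequality.

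The argument is essentially bookkeeping; the one subtlety is choosing the ``correct'' equivalent representation of $\MMDt F$ in each direction. For the upper bound on $\MMD F-\MMDt F$ I want to test the restricted problem at a projected $y$ and expand the inner min to $\Delta_X$, producing a Lipschitz slack controlled by $\|y^*_p-y^*\|_1$; for the lower bound I do the mirrored test at a projected $x$, producing a slack controlled by $\|x^*_p-x^*\|_1$. Using the other orientation would leave an uncontrolled term (the distance between the inner optimizer and the simplex boundary) and fail to give the stated asymmetric bound.
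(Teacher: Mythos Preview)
Your proof is correct and follows essentially the same approach as the paper: project $x^*$ (respectively $y^*$) onto the restricted simplex, use Lipschitz continuity of $F=\sum_t\bar{\EL}_t$ to control the induced error, and close with the saddle-point inequalities. The only cosmetic difference is that the paper works with an explicit saddle point $(x^*_\theta,y^*_\theta)$ of the restricted problem and chains $F(x^*,y^*)\leq F(x^*_\theta,y^*)\leq F(x^*_\theta,y^*_p)+G_{\bar{\EL}}T\|y^*_p-y^*\|_1\leq F(x^*_\theta,y^*_\theta)+G_{\bar{\EL}}T\|y^*_p-y^*\|_1$, whereas you invoke the minimax theorem to write $\MMDt F$ in max--min form and use a domain-enlargement step; these are equivalent maneuvers.
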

Combining the previous two lemmas and Theorem \ref{theorem:sp_regret_convex_concave}, one can show the NE regret bound for \textsc{OMG-RFTL} holds.

\section{Online Matrix Games: Bandit Feedback}\label{section:omg_bandit}

In this section we focus on the OMG problem under bandit feedback. In this setting, the players observe in every round only the payoff corresponding to the chosen actions. If Player 1 chooses action $i$, Player 2 chooses action $j$, and the payoff matrix at that time step is $A_t$, then the players observe only $(A_t)_{ij}$ instead of the full matrix $A_t$. The limited feedback makes the problem significantly more challenging than the full information one: the players must find a way to \textit{exploit} (use all previous information to try to play a Nash Equilibrium) and \textit{explore} (try to estimate $A_t$ in every round). This problem resembles that of Online Bandit Optimization \cite{flaxman2005online,auer1995gambling,bubeck2016kernel,hazan2016optimal}, while the main difference is that with one function evaluation we must estimate a matrix $A_t$ instead of the gradients $\nabla_x \EL_t(x,y)$ and $\nabla_y \EL_t(x,y)$ where $\EL_t = x^\top A_t y$.

Before proceeding we establish some useful notation. 
For $i=1,...,d$, let $e_i \in \mathbb{R}^d$  be the collection of standard unit vectors i.e. $e_i$ is the vector that has a $1$ in the $i$-th entry and $0$ in the rest. Let $e_{x,t}$ be the standard unit vector corresponding to the decision made by Player 1 for round $t$, define $e_{y,t}$ similarly. Notice that under bandit feedback, in round $t$ both players only observe the quantity $e_{x,t}^{\top} A_t e_{y,t}$.

\subsection{A One-Point Estimate for $\EL(x,y) = x^\top A y$}
As explained previously, in each round $t$ the players must estimate $A_t$ by observing only one of its entries. To this end, we allow the players to share with each other their decisions and to randomize \textit{jointly} (a similar assumption is used to define correlated equilibria in zero-sum games, see \cite{aumann1987correlated}). The following result shows how to build a random estimate of $A$ by observing only one of its entries.
 
\begin{theorem}\label{thm:hess_estimate}
Let $x\in \Delta_{X,\delta}, y\in \Delta_{Y,\delta}$ with $d_1,d_2\geq2$ and $\delta >0$. Sample $i' \sim x, j'\sim y$. Let $\hat{A}$ be the $d_1\times d_2$ matrix with $\hat{A}_{i,j}=0$ for all $i,j$ such that $i\neq i'$ and $j\neq j'$ and $\hat{A}_{i',j'} = \frac{A_{i',j'}}{x(i')y(j')}$. It holds that 
\begin{equation*} 
\mathbb{E}_{i' \sim x, j' \sim y} [ \hat{A}] = A.
\end{equation*}
\end{theorem}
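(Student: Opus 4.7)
The proof is a direct importance-sampling calculation. The plan is to establish $\mathbb{E}[\hat{A}] = A$ entrywise, i.e.\ to show $\mathbb{E}_{i'\sim x,\, j'\sim y}[\hat{A}_{i,j}] = A_{i,j}$ for every fixed index pair $(i,j) \in [d_1]\times[d_2]$.

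First, I would note that because $x \in \Delta_{X,\delta}$ and $y \in \Delta_{Y,\delta}$, all coordinates satisfy $x(i) \geq \delta > 0$ and $y(j) \geq \delta > 0$, so the ratio $A_{i',j'}/(x(i')y(j'))$ is always well defined and $\hat{A}$ is a bona fide random matrix. Then I would fix an index pair $(i,j)$ and expand the expectation over the joint distribution of $(i',j')$. Since $i'$ and $j'$ are drawn independently from $x$ and $y$ respectively, the probability of the event $\{i'=i,\, j'=j\}$ equals $x(i)y(j)$.

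Next, I would use the definition of $\hat{A}$: the random matrix is zero in every coordinate except the single sampled coordinate $(i',j')$, where its value is $A_{i',j'}/(x(i')y(j'))$. Thus for the fixed target entry $(i,j)$, the random variable $\hat{A}_{i,j}$ takes the value $A_{i,j}/(x(i)y(j))$ on the event $\{i'=i,\,j'=j\}$ and the value $0$ on its complement. Therefore
\begin{align*}
\mathbb{E}_{i'\sim x,\, j'\sim y}[\hat{A}_{i,j}]
&= \sum_{i''=1}^{d_1}\sum_{j''=1}^{d_2} \Pr[i'=i'',\, j'=j'']\cdot \hat{A}_{i,j}\big|_{i'=i'',\,j'=j''} \\
&= \Pr[i'=i,\, j'=j] \cdot \frac{A_{i,j}}{x(i)y(j)} \\
&= x(i)y(j)\cdot \frac{A_{i,j}}{x(i)y(j)} \;=\; A_{i,j}.
\end{align*}
Since this holds for every $(i,j)$, we conclude $\mathbb{E}[\hat{A}] = A$.

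There is no real obstacle here: this is the standard one-point (importance-weighted) estimator, analogous to the one used in \cite{auer1995gambling}. The only subtlety worth flagging is that the construction genuinely requires the \emph{restricted} simplex $\Delta_{X,\delta}\times\Delta_{Y,\delta}$ (rather than the full simplex), because the inverse-probability weights $1/(x(i')y(j'))$ would otherwise blow up; this is the reason the theorem statement insists on $x\in\Delta_{X,\delta}$ and $y\in\Delta_{Y,\delta}$, and it is what ultimately motivates the $\delta$-clipping used by the bandit algorithm in the remainder of Section~\ref{section:omg_bandit}.
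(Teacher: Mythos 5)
Your proof is correct and is exactly the standard importance-weighted (inverse-propensity) calculation that the theorem intends; the paper itself omits a written proof of this result, and your entrywise expectation computation, using independence of $i'$ and $j'$ so that $\Pr[i'=i,\,j'=j]=x(i)y(j)$, together with the observation that the $\delta$-restricted simplices keep the weights finite, is precisely the argument needed. The only point worth noting is that the theorem's phrasing ``$\hat{A}_{i,j}=0$ for all $i,j$ such that $i\neq i'$ and $j\neq j'$'' should be read as ``all entries other than $(i',j')$ are zero,'' which is the interpretation you (correctly) adopt and which is required for unbiasedness.
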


\subsection{Bandit Online Matrix Games \textsc{RFTL}}
We now present an algorithm that ensures sublinear (i.e. $o(T)$) NE regret under bandit feedback for the OMG problem that holds against an adaptive adversary. By adaptive adversary, we mean that the payoff matrices $A_t$ can depend on the players' actions up to time $t-1$; in particular, we assume the adversary does not observe the actions chosen by the players for time period $t$ when choosing $A_t$. We consider an algorithm that runs \textsc{OMG-RFTL} on a sequence of functions $\hat{\EL}_t \triangleq x^\top \hat{A}_t y$, where $\hat{A}_t$ is the unbiased one-point estimate of $A_t$ derived in Theorem~\ref{thm:hess_estimate}. Recall that the iterates of \textsc{OMG-RFTL} algorithm are distributions over the possible actions of both players. In order to generate the estimate $\hat{A}_t$, both players will sample an action from their distributions and weigh their observation with the inverse probability of obtaining that observation. 

\begin{algorithm}[tbh]
\caption{Bandit Online-Matrix-Games Regularized-Follow-the-Leader (\textsc{Bandit-OMG-RFTL})}
\label{alg: BOMGFTRL}
\begin{algorithmic}
  \STATE {\bfseries input:} $x_1 \in \Delta_{X,\delta}\subset \mathbb{R}^{d_1}$, $y_1 \in \Delta_{Y,\delta}\subset \mathbb{R}^{d_2}$, parameters: $\eta>0$, $0<\delta<\min \{\frac{1}{d_1}, \frac{1}{d_2}\}$.
  \FOR{$t=1,...T$}
  \STATE Sample independently $e_{x,t} \sim \tilde{x}_t$ and $e_{y,t} \sim \tilde{y}_t$
  \STATE Observe $e_{x,t}^{\top} A_t e_{y,t}$
  \STATE  Build $\hat{A_t}$ as in Theorem \ref{thm:hess_estimate} using  $e_{x,t}^{\top} A_t e_{y,t}, x_t, y_t$
    \STATE $\hat{\EL}_t \gets x^\top \hat{A}_t y$
    \STATE $\EL_t(x,y) \gets \hat{\EL}_t + \frac{1}{\eta}R_X(x)-\frac{1}{\eta}R_Y(y)$
  \STATE $x_{t+1}\leftarrow \arg \min_{x\in \Delta_{X,\theta}} \max_{y \in \Delta_{Y, \theta}} \sum_{\tau=1}^t \EL_t(x,y)$
  \STATE $y_{t+1}\leftarrow \arg \max_{y \in \Delta_{Y, \theta}} \min_{x\in \Delta_{X,\theta}}  \sum_{\tau=1}^t \EL_t(x,y) $
  \ENDFOR
\end{algorithmic}
\end{algorithm}

We have the following guarantee for \textsc{Bandit-OMG-RFTL}. 
\begin{theorem}\label{no_bandit_regret}
Let $\{A_t\}_{t=1}^T$ be any sequence of payoff matrices chosen by an adaptive adversary. Let $\{e_{x,t},e_{y,t}\}_{t=1}^T$ be the iterates generated by \textsc{Bandit-OMG-FTRL}. Setting $\delta = \frac{1}{T^{1/6}}$, $\eta = T^{1/6}$ ensures
\begin{align*} 
&\left| \mathbb{E}  \left[\sum_{t=1}^T e_{x,t}^{\top}A_t e_{y,t} - \MM  \sum_{t=1}^T x^{\top}A_t y \right] \right|  \\ \leq& O((d_1 + d_2) \ln(T) T^{5/6})
 \end{align*}
where the expectation is taken with respect to randomization in the algorithm.
\end{theorem}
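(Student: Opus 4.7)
The strategy is to reduce the bandit problem to the full-information analysis of Theorem~\ref{thm:omg_rftl_regret} applied to the one-point estimates $\hat A_t$, paying three additive costs: (i) the variance of the played actions $e_{x,t}, e_{y,t}$ around the iterates $x_t, y_t$ and the exploration bias, (ii) a concentration error when the comparator $\MMDt\sum_t x^\top \hat A_t y$ replaces $\MMDt\sum_t x^\top A_t y$, and (iii) the cost of restricting the benchmark from $\Delta$ to $\Delta_\theta$.

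I would begin by conditioning on $\mathcal F_{t-1}$, the sigma-algebra of actions chosen before round $t$. Because the adversary is adaptive, $A_t$ and $(x_t,y_t)$ are both $\mathcal F_{t-1}$-measurable. Unbiased sampling gives $\mathbb E[e_{x,t}^\top A_t e_{y,t}\mid\mathcal F_{t-1}] = \tilde x_t^\top A_t \tilde y_t$, where $\tilde x_t, \tilde y_t$ are the (exploration-mixed) sampling distributions with all coordinates at least $\delta$. A short calculation bounds $|\tilde x_t^\top A_t \tilde y_t - x_t^\top A_t y_t|$ by the exploration radius, producing an aggregate cost $O(\delta T\max(d_1,d_2))$. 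Next, Theorem~\ref{thm:hess_estimate} gives $\mathbb E[\hat A_t\mid\mathcal F_{t-1}] = A_t$, so $\mathbb E[x_t^\top \hat A_t y_t\mid\mathcal F_{t-1}] = x_t^\top A_t y_t$, which lets me swap $A_t$ for $\hat A_t$ inside the algorithm's cumulative loss at zero cost in expectation.

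I would then apply Theorem~\ref{thm:omg_rftl_regret} to the sequence $\hat{\EL}_t(x,y) = x^\top \hat A_t y$. Since each $\hat A_t$ has at most one nonzero entry, bounded in magnitude by $1/\delta^2$ (because $\tilde x_t(i)\tilde y_t(j)\ge\delta^2$), the $\|\cdot\|_1$-Lipschitz constant satisfies $G_{\hat{\EL}} \le 1/\delta^2$. Plugging in $\delta = T^{-1/6}$ and $\eta = T^{1/6}$ (which matches the $\eta=\sqrt T/G_{\hat{\EL}}$ prescription of that theorem), the bound yields pathwise
\[
\left|\sum_{t=1}^T x_t^\top \hat A_t y_t - \MMDt\sum_{t=1}^T x^\top \hat A_t y\right| \le O\!\bigl(T^{5/6}\ln T + \sqrt T\ln\max(d_1,d_2)\bigr)+o(1)\max(d_1,d_2),
\]
which passes to expectations. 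The restricted-simplex cost $\bigl|\MMDt\sum_t x^\top A_t y - \MM\sum_t x^\top A_t y\bigr| \le 2T\theta\max(d_1,d_2)$ from Lemmas~\ref{lemma:dist_proj_sp}--\ref{lemma:sp_val_error_theta} is negligible because $\theta = e^{-\sqrt T}$ under these parameter choices.

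The main obstacle is controlling $\mathbb E\bigl[\MMDt\sum_t x^\top \hat A_t y - \MMDt\sum_t x^\top A_t y\bigr]$: Jensen's inequality points the wrong way for $\min\max$, so the expectation cannot be pulled inside. Instead I would use the Lipschitz property of the saddle value in the matrix sequence,
\[
\Bigl|\MMDt\textstyle\sum_t x^\top B_t y - \MMDt\textstyle\sum_t x^\top C_t y\Bigr| \le \max_{i,j}\Bigl|\textstyle\sum_t (B_t - C_t)_{ij}\Bigr|,
\]
obtained by substituting either saddle strategy of one problem into the objective of the other and bounding the bilinear deviation by its largest coordinate (via containment in the full simplex whose extreme points are unit vectors). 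Setting $B_t=\hat A_t$ and $C_t=A_t$, Theorem~\ref{thm:hess_estimate} makes each $\{(\hat A_t-A_t)_{ij}\}_t$ a martingale difference sequence with increments bounded by $O(1/\delta^2)$, so Azuma-Hoeffding combined with a union bound over the $d_1d_2$ coordinates controls this term in expectation by $O\bigl((1/\delta^2)\sqrt{T\ln(d_1d_2)}\bigr) = O\bigl(T^{5/6}\sqrt{\ln(d_1 d_2)}\bigr)$. Summing the sampling cost $O(\delta T\max(d_1,d_2))$, the pathwise algorithmic regret on $\hat A_t$, the concentration error above, and the restricted-comparator cost with the prescribed parameters yields the $O((d_1+d_2)\ln(T)\,T^{5/6})$ bound of the theorem.
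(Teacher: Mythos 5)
Your decomposition is the same as the paper's: unbiasedness of the sampled payoff and of $\hat A_t$ (Lemmas~\ref{e_to_x} and~\ref{A_hat_no_hat} via Theorem~\ref{thm:hess_estimate}), the full-information guarantee of Theorem~\ref{thm:omg_rftl_regret} applied to $\hat{\EL}_t$ with $G_{\hat{\EL}}\le 1/\delta^2$, a comparator-swap term, and a restricted-simplex term, assembled with $\delta=T^{-1/6}$, $\eta=T^{1/6}$. The one genuinely different ingredient is how you control the comparator swap $\mathbb{E}\bigl[\min\max\sum_t x^\top\hat A_t y-\min\max\sum_t x^\top A_t y\bigr]$. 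You bound the saddle-value perturbation by the entrywise sup norm $\max_{i,j}|\sum_t(\hat A_t-A_t)_{ij}|$ and invoke Azuma--Hoeffding with a union bound over $d_1d_2$ coordinates, getting $O\bigl(\delta^{-2}\sqrt{T\ln(d_1d_2)}\bigr)$. The paper instead bounds the perturbation by $\bigl\Vert\sum_t(A_t-\hat A_t)y\bigr\Vert_2$ (Lemma~\ref{close_sp_vals}) and computes a second moment directly, using that the cross terms $\mathbb{E}[\alpha_t^\top\alpha_{t'}]$ vanish by the martingale property, yielding $2\sqrt{T}\min(d_1,d_2)/\delta^2$ (Lemma~\ref{lemma_with_alphas}). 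Your route gets a better dimension dependence ($\sqrt{\ln(d_1d_2)}$ versus $\min(d_1,d_2)$) at the cost of a tail-integration step to convert the high-probability Azuma bound into an expectation bound; both land at $O(\mathrm{poly}(d)\,T^{5/6})$.

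Two bookkeeping points. First, the algorithm does not mix an exploration distribution into $x_t,y_t$: the iterates themselves live in $\Delta_{X,\delta}\times\Delta_{Y,\delta}$ and are sampled directly, so Lemma~\ref{e_to_x} is an exact identity and your $O(\delta T\max(d_1,d_2))$ ``exploration bias'' term does not arise from sampling. Second, the benchmark restriction that actually matters is to $\Delta_\delta$ with $\delta=T^{-1/6}$ (needed so the importance weights are bounded by $\delta^{-2}$), not to $\Delta_\theta$ with $\theta=e^{-\sqrt{T}}$; its cost is $O(\delta T\max(d_1,d_2))=O(\max(d_1,d_2)T^{5/6})$ and is \emph{not} negligible. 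Your final sum happens to contain a term of exactly this order (mislabeled as the sampling cost), so the stated bound survives, but the attribution should be corrected.
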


We now give a sketch of the proof. 
The total payoff given to each of the players is given by $\sum_{t=1}^T e_{x,t}^\top A_t e_{y,t}$ so we must relate this quantity to the iterates $\{x_t,y_t\}_{t=1}^T$ of \textsc{OMG-RFTL} when run on sequence of matrices $\{\hat{A}_t \}_{t=1}^T$. The following two lemmas will allow us to do so. 

\begin{lemma}\label{e_to_x}
Let $\{e_{x,t}, e_{y,t}\}_{t=1}^T$ be the sequence of iterates generated by \textsc{Bandit-OMG-RFTL}. It holds that
\begin{align*} \textstyle
 \mathbb{E}\left[  \sum_{t=1}^T e_{x,t}^{\top}A_t e_{y,t}\right] =  \mathbb{E}\left[ \sum_{t=1}^T x^{\top}_t A_t y_t\right],
 \end{align*}
 where the expectation is taken with respect to the internal randomness of the algorithm.
 \end{lemma}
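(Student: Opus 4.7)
The plan is to exploit the fact that, conditional on all information available at the start of round $t$, the random actions $e_{x,t}$ and $e_{y,t}$ are sampled independently from $x_t$ and $y_t$, while $x_t$, $y_t$ and $A_t$ itself are already determined. A single application of the tower property then collapses $\mathbb{E}[e_{x,t}^\top A_t e_{y,t}]$ to $\mathbb{E}[x_t^\top A_t y_t]$ for each $t$, and summing over $t$ yields the claim.

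Concretely, I would first set up a filtration $\{\mathcal{F}_t\}_{t\geq 0}$ where $\mathcal{F}_{t-1}$ is generated by all samples $e_{x,1}, e_{y,1}, \ldots, e_{x,t-1}, e_{y,t-1}$ and all matrices $A_1, \ldots, A_{t-1}$ observed before round $t$. Because $x_t$ and $y_t$ are produced by the deterministic updates of \textsc{Bandit-OMG-RFTL} applied to $\hat{A}_1, \ldots, \hat{A}_{t-1}$, they are $\mathcal{F}_{t-1}$-measurable. Since the adversary is adaptive in the sense defined in the paper (it may use past actions but does not see the sample drawn in round $t$ when choosing $A_t$), the matrix $A_t$ is also $\mathcal{F}_{t-1}$-measurable. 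Finally, conditional on $\mathcal{F}_{t-1}$, the algorithm specifies that $e_{x,t} \sim x_t$ and $e_{y,t} \sim y_t$ are drawn \emph{independently}.

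With these measurability and independence facts in hand, the core computation is a one-line check:
\begin{align*}
\mathbb{E}\!\left[e_{x,t}^\top A_t e_{y,t} \,\big|\, \mathcal{F}_{t-1}\right]
&= \sum_{i=1}^{d_1}\sum_{j=1}^{d_2} x_t(i)\, y_t(j)\, (A_t)_{ij}
= x_t^\top A_t y_t,
\end{align*}
where the first equality uses conditional independence of $e_{x,t}$ and $e_{y,t}$, and the $\mathcal{F}_{t-1}$-measurability of $A_t$. Taking full expectations and using the tower property gives $\mathbb{E}[e_{x,t}^\top A_t e_{y,t}] = \mathbb{E}[x_t^\top A_t y_t]$, and summing over $t=1,\ldots,T$ proves the lemma.

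I do not expect any real obstacle here; the only subtlety is being careful about what the adaptive adversary is allowed to condition on. The statement ``the adversary does not observe the actions chosen by the players for time period $t$ when choosing $A_t$'' is exactly what places $A_t$ in $\mathcal{F}_{t-1}$ and makes the pulling-apart step legitimate. If the adversary were allowed to peek at $e_{x,t}, e_{y,t}$ before choosing $A_t$, the factorization of the conditional expectation would fail and the identity would no longer hold.
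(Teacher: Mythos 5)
Your proof is correct, and it is the same tower-property/conditional-independence argument the paper uses for the companion result (Lemma~\ref{A_hat_no_hat}); the paper in fact omits an explicit proof of Lemma~\ref{e_to_x}, but your filtration setup, the observation that $x_t$, $y_t$, and $A_t$ are measurable with respect to the past, and the factorization $\mathbb{E}[e_{x,t}^\top A_t e_{y,t}\mid \mathcal{F}_{t-1}]=x_t^\top A_t y_t$ are exactly the intended argument. Your remark about why the adaptive adversary must not see the round-$t$ samples is also the right subtlety to flag.
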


\begin{lemma}\label{A_hat_no_hat}
It holds that
\begin{align*} \textstyle
\mathbb{E}\left[\sum_{t=1}^T x_t^{\top} \hat{A}_t y_t\right] = \mathbb{E}\left[\sum_{t=1}^T x_t^{\top} A_t y_t\right],
\end{align*}
where the expectation is with respect to all the internal randomness of the algorithm.
\end{lemma}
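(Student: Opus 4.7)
The plan is to prove the identity termwise using the tower property of conditional expectation, leveraging the unbiasedness of $\hat{A}_t$ guaranteed by Theorem~\ref{thm:hess_estimate}.

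First, I would set up a filtration that captures all randomness used by the algorithm. Let $\mathcal{F}_t \triangleq \sigma(e_{x,1}, e_{y,1}, \dots, e_{x,t}, e_{y,t})$, so $\mathcal{F}_0$ is trivial. A quick induction on $t$ shows that $(x_t, y_t)$ is $\mathcal{F}_{t-1}$-measurable: the iterates are produced deterministically by solving a saddle-point problem over the functions $\{\EL_\tau\}_{\tau=1}^{t-1}$, and each $\hat{A}_\tau$ (hence each $\EL_\tau$) is constructed from $(x_\tau, y_\tau, e_{x,\tau}, e_{y,\tau}, e_{x,\tau}^\top A_\tau e_{y,\tau})$, all of which are $\mathcal{F}_\tau$-measurable. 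Because the adversary is adaptive but does not observe the time-$t$ action when choosing $A_t$, the matrix $A_t$ is a (deterministic) function of the history $\{e_{x,\tau}, e_{y,\tau}\}_{\tau<t}$, so $A_t$ is also $\mathcal{F}_{t-1}$-measurable.

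Next, I would condition on $\mathcal{F}_{t-1}$. On this event, $x_t, y_t, A_t$ are all fixed, and the only remaining randomness entering $\hat{A}_t$ is the fresh pair $(e_{x,t}, e_{y,t})$ sampled independently from $x_t$ and $y_t$ respectively. Theorem~\ref{thm:hess_estimate}, applied to this conditional distribution (with $x = x_t$, $y = y_t$, $A = A_t$), immediately gives
\begin{equation*}
\mathbb{E}\bigl[\hat{A}_t \,\big|\, \mathcal{F}_{t-1}\bigr] = A_t.
\end{equation*}
Since $x_t$ and $y_t$ are $\mathcal{F}_{t-1}$-measurable, linearity of conditional expectation yields
\begin{equation*}
\mathbb{E}\bigl[x_t^\top \hat{A}_t y_t \,\big|\, \mathcal{F}_{t-1}\bigr] = x_t^\top \,\mathbb{E}\bigl[\hat{A}_t \,\big|\, \mathcal{F}_{t-1}\bigr]\, y_t = x_t^\top A_t y_t.
\end{equation*}
Taking unconditional expectations via the tower property and summing over $t=1,\dots,T$ (again by linearity of expectation) produces exactly the claimed identity.

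The only subtle point is the measurability bookkeeping: one must be careful that $A_t$ really is $\mathcal{F}_{t-1}$-measurable (so it can be pulled out of the inner expectation), which follows from the explicit definition of adaptive adversary given just before the theorem statement, namely that $A_t$ may depend on past actions but not on time-$t$ randomness. Once this is granted, the argument is a one-line application of Theorem~\ref{thm:hess_estimate} inside a conditional expectation, and no further estimates are needed.
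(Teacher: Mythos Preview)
Your proposal is correct and follows essentially the same approach as the paper's proof: both condition on the history up to time $t-1$, use that $x_t,y_t,A_t$ are measurable with respect to that history, apply Theorem~\ref{thm:hess_estimate} to obtain $\mathbb{E}[\hat{A}_t\mid\text{past}]=A_t$, and then invoke the tower property. The paper presents the argument as peeling off one term at a time (an informal backward induction), whereas you do all terms at once with an explicit filtration; your measurability bookkeeping is in fact more careful than the paper's, but the underlying idea is identical.
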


We will then bound the difference between the comparator term $\MMD \sum_{t=1}^T x^{\top} A_t y$ and the comparator term Theorem \ref{thm:omg_rftl_regret} gives us by running \textsc{OMG-RFTL} on functions $\{\hat{\EL}\}_{t=1}^T$, $\MMD \sum_{t=1}^T x^{\top} \hat{A}_t y$. Special care must be taken to ensure this difference holds even against an adaptive adversary. To this end, we use the next two lemmas; in particular, the proof of Lemma \ref{lemma_with_alphas} relies heavily on Theorem \ref{thm:hess_estimate}.

\begin{lemma}\label{close_sp_vals}
With probability 1 it holds that
\begin{align*}
&\left| \min_{x\in \Delta_X^\delta}\max_{y\in \Delta_Y^\delta} \sum_{t=1}^T x^{\top}A_t y - \min_{x\in \Delta_X^\delta}\max_{y\in \Delta_Y^\delta} \sum_{t=1}^T x^{\top}\hat{A}_t y \right| \\
\leq & \textstyle \left \Vert \sum_{t=1}^T A_t y - \hat{A}_t y \right \Vert_2.
\end{align*}
\end{lemma}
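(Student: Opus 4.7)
The plan is to reduce the gap between the two saddle-point values to a uniform bound on the gap between the two bilinear objectives, using the standard minimax stability property that a saddle-point value changes by at most the uniform perturbation of its objective; then to bound the pointwise objective gap via Cauchy--Schwarz, exploiting the fact that both players' iterates live in the probability simplex. Set $f(x,y) = \sum_{t=1}^T x^\top A_t y$ and $\hat{f}(x,y) = \sum_{t=1}^T x^\top \hat{A}_t y$, and let $V, \hat{V}$ denote their respective saddle-point values over $\Delta_X^\delta \times \Delta_Y^\delta$. Both feasible sets are convex and compact, and both functions are bilinear (hence convex-concave), so by von Neumann's minimax theorem each saddle-point value is attained. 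The claimed inequality is purely deterministic for each realization of the randomness defining $\hat{A}_t$, which is why it holds with probability one.

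First, I would establish the perturbation inequality
$$|V - \hat{V}| \;\leq\; \sup_{x \in \Delta_X^\delta,\; y \in \Delta_Y^\delta} |f(x,y) - \hat{f}(x,y)|.$$
Let $x^* \in \arg\min_{x \in \Delta_X^\delta} \max_{y \in \Delta_Y^\delta} f(x,y)$, so $V = \max_y f(x^*, y)$. By definition of $\hat{V}$ as an outer minimum,
$$\hat{V} \;\leq\; \max_{y} \hat{f}(x^*, y) \;\leq\; \max_{y} f(x^*, y) + \sup_{x,y}|\hat{f}(x,y) - f(x,y)| \;=\; V + \sup_{x,y}|\hat{f} - f|,$$
and a symmetric argument starting from an outer minimizer of $\hat{V}$ yields $V \leq \hat{V} + \sup_{x,y}|\hat{f} - f|$. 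Next, I would bound the pointwise gap. Writing $f(x,y) - \hat{f}(x,y) = x^\top \bigl(\sum_{t=1}^T (A_t - \hat{A}_t)\bigr) y$, applying Cauchy--Schwarz in $x$, and using that $x \in \Delta$ forces $\|x\|_2 \leq \|x\|_1 = 1$, I obtain
$$|f(x,y) - \hat{f}(x,y)| \;\leq\; \|x\|_2 \cdot \left\| \sum_{t=1}^T (A_t - \hat{A}_t) y \right\|_2 \;\leq\; \left\| \sum_{t=1}^T (A_t - \hat{A}_t) y \right\|_2,$$
with the supremum over $y$ on the right yielding the form displayed in the lemma (where $y$ is understood as the worst-case maximizer).

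The main obstacle is essentially cosmetic: the proof is short and standard once one identifies the minimax stability lemma, and the only delicate point is choosing the norm-duality pair so that the final expression involves $\|\cdot\|_2$ rather than, say, $\|\cdot\|_\infty$ (which would come out of a Hölder bound). This is precisely what the Cauchy--Schwarz step accomplishes using $\|x\|_2 \leq 1$ on the simplex, and this form is exactly what the next step in the bandit regret analysis will need in order to invoke concentration on the vector $\sum_t (A_t - \hat{A}_t) y$ generated by the unbiased one-point estimator from Theorem~\ref{thm:hess_estimate}. The ``with probability 1'' qualifier is immediate, since every step is a pointwise inequality that holds for any realization of $\hat{A}_t$ with no expectation taken.
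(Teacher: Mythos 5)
Your proof is correct and follows essentially the same route as the paper's: bound the pointwise objective gap by $\Vert x\Vert_2\,\Vert\sum_{t=1}^T (A_t-\hat{A}_t)y\Vert_2 \le \Vert\sum_{t=1}^T (A_t-\hat{A}_t)y\Vert_2$ via Cauchy--Schwarz and $\Vert x\Vert_2\le\Vert x\Vert_1=1$ on the simplex, then transfer this to the saddle-point values. The only (cosmetic) difference is that you use the standard perturbation argument evaluated at an outer minimizer, whereas the paper chains the inequality through $\max_y\min_x$ and then invokes minimax equality; both correctly treat the free $y$ on the right-hand side as the worst case.
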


\begin{lemma}\label{lemma_with_alphas}
It holds that
\begin{align*}
\mathbb{E}\left[ \left \Vert \sum_{t=1}^T A_t y - \hat{A}_t y \right \Vert_2 \right] \leq \frac{2 \sqrt{T} \min (d_1,d_2)}{\delta^2},
\end{align*}
where the expectation is taken with respect to the internal randomness of the algorithm.
\end{lemma}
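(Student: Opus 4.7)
The plan is to first apply Jensen's inequality to dominate $\mathbb{E}\bigl[\bigl\Vert \sum_t (A_t-\hat{A}_t)y\bigr\Vert_2\bigr]$ by $\sqrt{\mathbb{E}\bigl[\bigl\Vert \sum_t (A_t-\hat{A}_t)y\bigr\Vert_2^2\bigr]}$, and then exploit the martingale structure induced by Theorem \ref{thm:hess_estimate}. Let $Z_t \triangleq (A_t-\hat{A}_t)y$ and let $\mathcal{F}_{t-1}$ denote the sigma-algebra generated by all plays and observations through round $t-1$ (so that $x_t, y_t$ are $\mathcal{F}_{t-1}$-measurable). The unbiasedness $\mathbb{E}[\hat{A}_t \mid \mathcal{F}_{t-1}] = A_t$ from Theorem \ref{thm:hess_estimate} yields $\mathbb{E}[Z_t \mid \mathcal{F}_{t-1}] = 0$, and upon expanding the squared norm the tower property kills all cross-terms $\mathbb{E}[Z_s^\top Z_t]$ with $s<t$, leaving $\mathbb{E}\bigl[\bigl\Vert \sum_t Z_t\bigr\Vert_2^2\bigr] = \sum_t \mathbb{E}\bigl[\Vert Z_t\Vert_2^2\bigr]$.

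Next I would bound the per-round second moment using the explicit one-point structure: $\hat{A}_t$ has only the entry $(i',j')$ nonzero, equal to $A_{i',j'}/(x_t(i')y_t(j'))$, so $\hat{A}_t y$ is supported at coordinate $i'$ with value $A_{i',j'}y_{j'}/(x_t(i')y_t(j'))$. Averaging over $(i',j')\sim(x_t,y_t)$ gives
\begin{align*}
\mathbb{E}\bigl[\Vert \hat{A}_t y\Vert_2^2\bigr] \;=\; \sum_{i,j} \frac{A_{ij}^2 \, y_j^2}{x_t(i)\, y_t(j)}.
\end{align*}
Since $\mathbb{E}[\Vert Z_t\Vert_2^2] \le \mathbb{E}[\Vert \hat{A}_t y\Vert_2^2]$, the bounds $|A_{ij}|\le 1$, $y_j\le 1$, and the restricted-simplex floors $x_t(i), y_t(j)\ge\delta$ control this double sum; carrying out the bound in two symmetric ways (one with a $d_1$-factor coming from summing over $i$ with $\sum_j y_j \le 1$, and one with a $d_2$-factor coming from $\sum_i A_{ij}^2 \le d_1$ and the analogous reindexing) and taking the tighter, summing over $t$, and applying Jensen finally yields $\tfrac{2\sqrt T \min(d_1,d_2)}{\delta^2}$.

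The main obstacle I anticipate is that in the use made of this lemma inside Lemma \ref{close_sp_vals}, $y$ is a saddle-point maximizer of a program depending on all $\{\hat{A}_t\}_{t=1}^T$ and is therefore not $\mathcal{F}_{t-1}$-predictable, so the step $\mathbb{E}[Z_t \mid \mathcal{F}_{t-1}]=0$ is not immediate. I would handle this by passing to a uniform-in-$y$ bound: since $\Vert y\Vert_2 \le \Vert y\Vert_1 = 1$ on the simplex, $\bigl\Vert \sum_t (A_t-\hat{A}_t)y\bigr\Vert_2 \le \bigl\Vert \sum_t (A_t-\hat{A}_t)\bigr\Vert_F$, and the same Hilbert-space martingale argument applies to this matrix-valued quantity with the Frobenius norm replacing $\Vert\cdot\Vert_2$ (the matrices $A_t-\hat{A}_t$ form a Frobenius-valued martingale difference sequence). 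The per-step bound $\mathbb{E}[\Vert A_t - \hat{A}_t\Vert_F^2] \le \mathbb{E}[\Vert \hat{A}_t\Vert_F^2] = \sum_{i,j} A_{ij}^2/(x_t(i)y_t(j))$ plays the analogous role, and combining the two routes recovers the stated constant and the $\min(d_1,d_2)$ dependence.
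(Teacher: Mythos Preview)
Your argument follows the paper's skeleton: Jensen to pass to the second moment, then orthogonality of the increments $Z_t=(A_t-\hat A_t)y$ to reduce $\mathbb{E}\bigl\|\sum_t Z_t\bigr\|_2^2$ to $\sum_t\mathbb{E}\|Z_t\|_2^2$, then a per-round bound. The only difference in the per-round step is that you compute the exact conditional second moment $\mathbb{E}[\|\hat A_t y\|_2^2\mid\mathcal F_{t-1}]=\sum_{i,j}A_{ij}^2 y_j^2/(x_t(i)y_t(j))$ and use the variance inequality $\mathbb{E}\|Z_t\|_2^2\le\mathbb{E}\|\hat A_t y\|_2^2$, whereas the paper uses the cruder almost-sure estimate $\|\hat A_t y\|_2\le \min(d_1,d_2)/\delta^2$ (obtained via H\"older with $\|y\|_1=1$) together with $\|Z_t\|_2^2\le 2\|A_ty\|_2^2+2\|\hat A_ty\|_2^2$. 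Both land on the stated bound; yours is in principle slightly sharper.

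Your second paragraph is a genuine addition. The paper's proof fixes $y$ deterministically at the outset (``for any $y$ define $\alpha_t\triangleq A_t y-\hat A_t y$'') and derives $\mathbb{E}[\alpha_t^\top\alpha_{t'}]=0$ under that assumption; it does not address the fact that the $y$ actually supplied by Lemma~\ref{close_sp_vals} is the saddle-point maximizer of $\sum_t x^\top\hat A_t y$ and hence depends on all of $\hat A_1,\dots,\hat A_T$, so the cross-term cancellation is not justified as written. Your Frobenius remedy---bounding $\sup_{y\in\Delta}\|My\|_2\le\|M\|_F$ and running the identical Hilbert-space martingale argument on the matrix-valued increments $A_t-\hat A_t$---is correct, yields $\mathbb{E}\bigl\|\sum_t(A_t-\hat A_t)\bigr\|_F\le\sqrt{Td_1d_2}/\delta$, and this is at most $2\sqrt T\min(d_1,d_2)/\delta^2$ under the algorithm's restriction $\delta<1/\max(d_1,d_2)$ (using $d_1,d_2\ge 2$). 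So your proposal both recovers the paper's argument and closes a gap the paper leaves open.
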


 The proof of Theorem~\ref{no_bandit_regret} follows by combining Lemmas \ref{e_to_x} through \ref{lemma_with_alphas}, with careful choice of tuning parameters.
 
 \section{Training Generative Adversarial Networks}\label{sec:GANs}
 In this section we use our ideas to train Generative Adversarial Networks (GANs) \cite{goodfellow2014generative}.
 
 \subsection{GAN Formulation}
GANs are particular approach to generative modeling. A \textit{generative model} is is a machine learning model that takes samples drawn from an unknown distribution $p_{data}$ and learns to represent an estimate of that distribution. After training, the model outputs a distribution $p_{model}$ or some way to generate samples from it \cite{goodfellow2016nips}. A GAN can be though of as two neural networks, the \textit{generator} $G$ and the \textit{discriminator} $D$, playing a game against each other. The goal of the generator is to create samples from $p_{model}$ that look like samples from $p_{data}$, and the goal of the discriminator is to recognize if a given sample comes from $p_{data}$ or if it is a fake sample generated by its adversary. The original GAN formulation from \cite{goodfellow2014generative} poses the problem as finding a solution to 

\begin{equation}\label{eq:gan_original}
\min_{G} \max_{D} \mathbb{E}_{\bm{x} \sim p_{data}(\bm{x})} [\log(D(\bm{x}))] + \mathbb{E}_{z\sim p_{\bm{z}}(\bm{z})}[\log(1 - D(G(\bm{z})))].
\end{equation}

Here $p_{\bm{z}}(\bm{z})$ is some noise distribution that $G$ maps onto the data space. Generative models have plenty of applications in other areas of machine learning, for example: reinforcement learning \cite{finn2016unsupervised}, semi-supervised learning \cite{springenberg2015unsupervised,salimans2016improved}, single image super resolution \cite{ledig2017photo}, image-to-image translation \cite{isola2017image}, and even art creation \cite{brock2016neural}, just to mention a few. 

\subsection{Mode Collapse}
The most natural approach to train a GAN (and the original one used in \cite{goodfellow2014generative}), is to simultaneously perform gradient descent on the generator's parameters and gradient ascent on those of the discriminator.
However, it has been shown that even in simple convex-concave games such as $\EL(x,y)=xy$, if one performs gradient descent on $x$ and gradient ascent on $y$ the dynamics do not necessarily converge to the Nash Equilibrium (see Ch. 5 of \cite{goodfellow2016nips} ). So it should not be surprising to observe that serious problems arise while training a GAN. We say a GAN suffered from mode collapse if the generator ends up producing samples from only a few modes from the distribution $p_{data}$, visually it means that the generator produces samples with low diversity. The first row in Figure \ref{fig:alg_comparison} shows a clear example of this. 

Since the introduction of GANs there has been an incredible effort from the machine learning community to understand why mode collapse occurs and how to fix it. In a very recent large-scale study \cite{lucic2017gans}, many GAN models were thoroughly tested to see if one outperformed the others. Their conclusion was ``we did not find evidence that any of the tested algorithms consistently outperforms the non-saturating GAN introduced in \cite{goodfellow2014generative}". The algorithms/models tested in the aforementioned study were: MM-GAN \cite{goodfellow2014generative}, NS-GAN \cite{goodfellow2014generative}, WGAN \cite{arjovsky2017wasserstein}, WGAN GP \cite{gulrajani2017improved}, LS GAN \cite{mao2017least}, DRAGAN \cite{kodali2017convergence} and BEGAN \cite{berthelot2017began}. 

The algorithm for training the non-saturating GAN from \cite{goodfellow2014generative} corresponds to running two sublinear individual regret algorithms in parallel, one for the generator and another for the discriminator. However, it is common to observe mode collapse using this training procedure. In view of Theorem \ref{thm:choose_one_omg} we tested a variant of  \textsc{SP-RFTL} on this setting hoping for significantly different training dynamics.

\subsection{SP-RFTL for Training GANs}
Even though the original GAN formulation, Equation \ref{eq:gan_original}, is not convex-concave we tested a variant of \textsc{SP-RFTL} on this setting. The particular implementation consists on 1) taking a mini-batch of data from $p_{data}$ with $N$ samples to approximate the payoff function in Equation  \ref{eq:gan_original} with
\begin{align*}
\EL_t(G,D) = \frac{1}{N} \sum_{n=1}^N \log(D(\bm{x_n})) + \mathbb{E}_{z\sim p_{\bm{z}}(\bm{z})}[\log(1 - D(G(\bm{z})))],
\end{align*}
 2) simultaneously run a sublinear individual Regret algorithm on the generator's parameters and another one on the discriminator's parameters for a fixed number of iterations and 3) uniformly average the iterates of both algorithms. Then we sample a new mini-batch of data from $p_{data}$ to obtain $\EL_{t+1}$ and repeat the procedure. If the payoff function $\EL_t$ were convex-concave then the combination of steps 2) and 3) would be equivalent to finding an approximate NE for $\EL_t$. It is easy to see that the procedure just described follows the spirit of \textsc{SP-RFTL}  where both $R_X, R_Y$ are set to any constant function. The reason for this is that the sequence $\EL_t$ is stochastic (not adversarial) and thus regularization is not necessary. 
 
 \subsection{Experiments}
 In Figure \ref{fig:alg_comparison} we compare our proposed algorithm \textsc{SP-RFTL} with: Unrolled GAN \cite{metz2016unrolled}, Wassertein GAN \cite{arjovsky2017wasserstein}, and Wassertein GAN with Gradient Clipping \cite{gulrajani2017improved}. The dataset is a mixture of eight gaussians placed uniformly in a circle of radius two with variance .02. The generator and discriminator architectures for Unrolled 0, Unrolled 4, and \textsc{SP-RFTL}  are identical to those in Appendix A from \cite{metz2016unrolled}. The optimization parameters for Unrolled 0 and Unrolled 4 are the ones suggested in \cite{metz2016unrolled}. The optimization parameters for \textsc{SP-RFTL}  are the same as for Unrolled 0, the extra parameter that controls how often we average the iterates was tuned by visual inspection. The WGAN and WGAN-GC architectures and parameters are exactly the ones provided in \cite{gulrajani2017improved}. WGAN and WGAN-GC use an extra fully connected  hidden layer compared to Unrolled 0, Unrolled 4, and \textsc{SP-RFTL}, we did not change the architecture  assuming \cite{gulrajani2017improved} did their best effort to produce their original results. All the algorithms use Adam \cite{kingma2014adam,reddi2018convergence} as an optimizer. 
All the experiments were run on a Mac-Book Pro with processor 3.1 GHz Intel Core i7, and 16 GB of RAM. In particular, no GPU was used. All the code for this project can be found at https://github.com/adrianriv/gans-mode-collapse.

We judge the performance of the generator based on the quality of its samples. From Figure \ref{fig:alg_comparison} it is obvious that \textsc{SP-RFTL} learns the correct underlying distribution in the shortest amount of time. A final remark is that Unrolled 0 corresponds to running two no-individual regret algorithms in parallel, which results in mode collapse. Interestingly, \textsc{SP-RFTL} the algorithm with best performance, is doing exactly the same with the difference that it is averaging its iterates every fixed number of rounds. 

\begin{figure*}[htb]
  	\includegraphics[width=.5\linewidth]{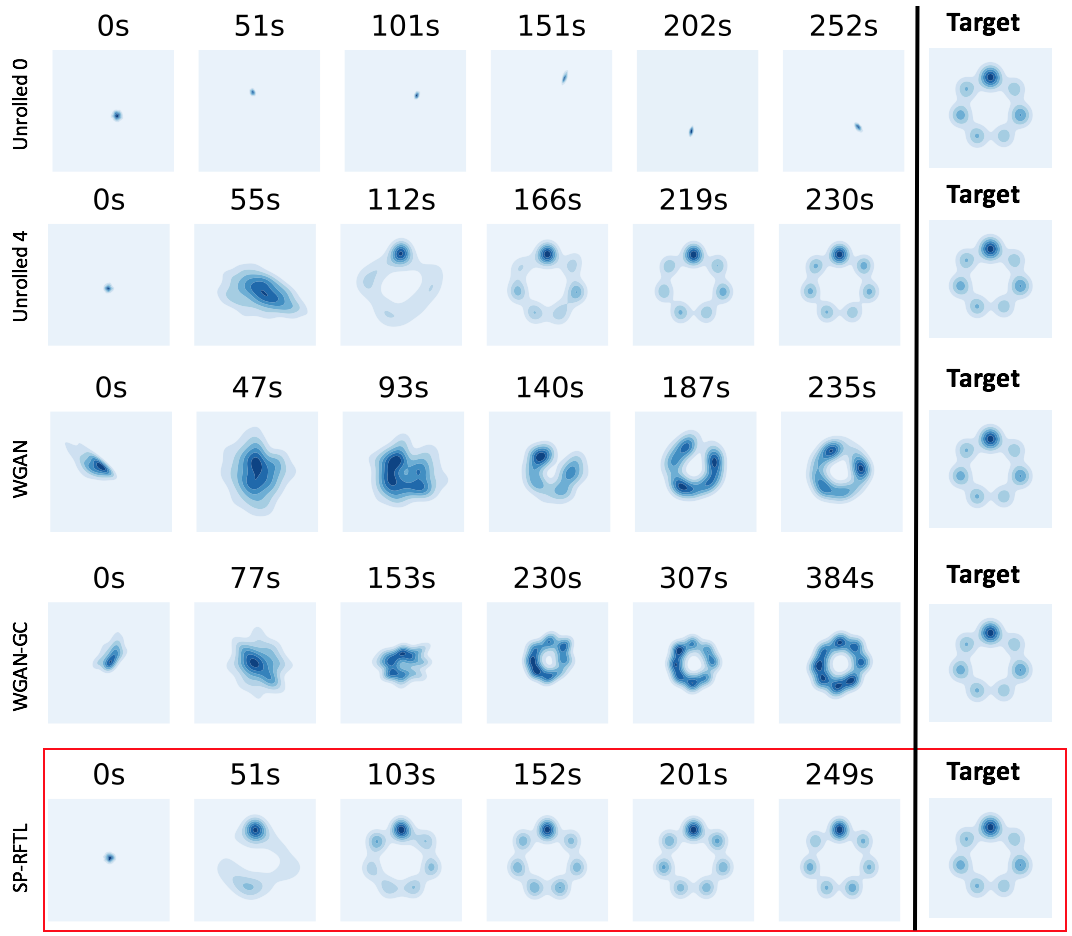}
  \centering
  \caption{Comparison of algorithms in the mixture of 8 gaussians dataset. Each image shows the probability density produced by the generator after $x$ seconds (CPU time) of training. It is clear that \textsc{SP-RFTL} (in red) outperforms all other algorithms.}
  \label{fig:alg_comparison}
\end{figure*}
 
\section{Conclusion}
In this paper, we considered the Online Matrix Games problem, where two players interact in a sequence of zero-sum games with arbitrarily changing payoff matrices. 
The goal for both players is to achieve small Nash Equilibrium (NE) regret, that is, the players want to ensure their average payoffs over $T$ rounds are close to those in the NE of the mean payoff matrix in hindsight. While it is known that standard Online Convex Optimization algorithms such as Online Gradient Descent can be used to find approximate equilibria in \emph{static} zero-sum games, our impossibility result shows that no algorithm for online convex optimization can achieve sublinear Nash Equilibrium regret ($o(T)$) when the sequence of payoffs are chosen arbitrarily. We then design and analyze algorithms that achieve sublinear NE regret for the Online Matrix Games problem, under both full information feedback and bandit feeback settings.  In the full information case, the performance of the algorithm is optimal with respect to the number of rounds (up to logarithm factors) and depends logarithmically on the number of actions of each player. For the bandit feedback setting, we provide an algorithm with sublinear NE regret using a one-point matrix estimate. Lastly, we test our algorithm for training GANs on a basic setup and obtain satisfactory results.

\section{Acknowledgements}
We thank an anonymous ICML reviewer who noticed an issue with our initial one-point estimate estimate of matrix $A_t$ and his/her suggestion on how to fix it.

\bibliography{mybib}
\bibliographystyle{abbrv} 
\newpage

\begin{appendix}

\section{Omitted Proofs}
\begin{proof}[Proof of Theorem \ref{thm:choose_one_omg}]
We assume there exists an algorithm such that 

$\vert \sum_{t=1}^Tx_t^\top A_t y_t - \MM  \sum_{t=1}^Tx^\top A_t y\vert \leq o(T)$, $\sum_{t=1}^Tx_t^\top A_t y_t - \min_{x\in \Delta_X} \sum_{t=1}^T x^\top A_t y_t \leq o(T)$, and $\max_{y\in \Delta_Y} \sum_{t=1}^T x_t^\top A_T y - \sum_{t=1}^Tx_t^\top A_t y_t \leq o(T)$ for all possible sequences of matrices $\{A_t\}_{t=1}^T$ with bounded entries between $[-1,1]$. We now construct two sequences of functions for which all the three guarantees hold and lead that to a contradiction. Let $T$ be divisible by $2$. In scenario 1: $A_t = 
\begin{bmatrix}
1 &-1 \\
-1 &1\\
\end{bmatrix}$
for $1\leq t \leq \frac{T}{2}$ and $A_t=
\begin{bmatrix}
0 &0 \\
0 &0\\
\end{bmatrix}$
for $\frac{T}{2}<t\leq T$.
In scenario 2: $A_t = 
\begin{bmatrix}
1 &-1 \\
-1 &1\\
\end{bmatrix}$
for $1\leq t \leq \frac{T}{2}$ and $A_t=
\begin{bmatrix}
1 &-1 \\
1 &-1\\
\end{bmatrix}$
for $\frac{T}{2}<t\leq T$. It is easy to see that for both scenarios it holds that $\MM \sum_{t=1}^T x^\top A_t y = 0$.
Since $d_1=d_2=2$ and we can parametrize any $x\in \Delta_X$ as $x = [\alpha; 1-\alpha]$ and any $y\in \Delta_Y$ as $y=[\beta;1-\beta]$ for some $0\leq \alpha, \beta \leq 1$. By assumption we have that $\max_{y\in \Delta_Y} \sum_{t=1}^T x_t^\top A_t y - \MM \sum_{t=1}^T x^\top A_t y \leq o(T)$ for all sequences of matrices $\{A_t\}_{t=1}^T$. This implies for scenario 1 that $\max_{0\leq \beta \leq 1} \sum_{t=1}^{\frac{T}{2}}4\alpha_t\beta - 2\beta + 1 -2\alpha_t \leq o(T)$ which also implies that $\sum_{t=1}^{\frac{T}{2}}2\alpha_t -1 \leq o(T)$ and $\sum_{t=1}^{\frac{T}{2}} 1-2\alpha_t \leq o(T)$ since $\sum_{t=1}^{\frac{T}{2}}4\alpha_t\beta - 2\beta + 1 -2\alpha_t$ is a linear function of $\beta$ and thus its maximum occurs at $\beta=0$ or $\beta=1$.

For scenario 2 $\max_{y\in \Delta_Y} \sum_{t=1}^T x_t^\top A_t y - \MM \sum_{t=1}^T x^\top A_t y \leq o(T)$ reduces to

 $\max_{0\leq \beta \leq 1} \sum_{t=1}^{\frac{T}{2}} 4\alpha_t \beta - 2\beta + 1 - 2\alpha_t + \frac{T}{2} (2\beta-1)\leq o(T)$ which implies $\sum_{t=1}^{\frac{T}{2}} 2\alpha_t -1 + \frac{T}{2} \leq o(T)$ and $\sum_{t=1}^{\frac{T}{2}} 1 -2\alpha_t  + \frac{T}{2} \leq o(T)$. Finally, notice that $\sum_{t=1}^{\frac{T}{2}} 2\alpha_t -1 + \frac{T}{2} \leq o(T)$ implies $\frac{T}{2} \leq o(T) +\sum_{t=1}^{\frac{T}{2}} 1 - 2\alpha_t$ but from scenario 1 we have that $\sum_{t=1}^{\frac{T}{2}} 1 - 2\alpha_t \leq o(T)$ since $\frac{T}{2}\leq o(T)$ is a contradiction we get the result.
\end{proof}

\begin{proof}[Proof of Lemma \ref{bilinear_lipschitz}]
We omit the subscript $t$.
\begin{align*}
\Vert \nabla x^\top Ay \Vert_2 &= 
\left\Vert
\begin{bmatrix}
\nabla_x x^\top Ay  \\
\nabla_y x^\top Ay
\end{bmatrix} \right\Vert_2 \\
&=
\left\Vert \begin{bmatrix}
A_{[1,:]}^\top y  \\
 ... \\
 A_{[d_1,:]}^\top y\\
 A_{[:,1]}^\top x  \\
 ... \\
 A_{[:,d_2]}^\top x
\end{bmatrix} \right\Vert_2 \\
& \leq 
\left\Vert \begin{bmatrix}
A_{[1,:]}^\top y  \\
 ... \\
 A_{[d_1,:]}^\top y\\
\end{bmatrix} \right\Vert_2 
+
\left\Vert \begin{bmatrix}
 A_{[:,1]}^\top x  \\
 ... \\
 A_{[:,d_2]}^\top x
\end{bmatrix} \right\Vert_2 \\
&\leq
\sqrt{\sum_{i=1}^{d_1}(A_{[i,:]}^\top y)^2} 
+
 \left\Vert \begin{bmatrix}
 A_{[:,1]}^\top x  \\
 ... \\
 A_{[:,d_2]}^\top x
\end{bmatrix} \right\Vert_2 \\
&\leq
\sqrt{d_1(\Vert A_{[i,:]}\Vert_{\infty} \Vert y\Vert_1)^2} 
+
\left\Vert \begin{bmatrix}
 A_{[:,1]}^\top x  \\
 ... \\
 A_{[:,d_2]}^\top x
\end{bmatrix} \right\Vert_2 \quad \text{by Generalized Cauchy Schwartz}\\
&\leq
\sqrt{c d_1} 
+
 \left\Vert \begin{bmatrix}
 A_{[:,1]}^\top x  \\
 ... \\
 A_{[:,d_2]}^\top x
\end{bmatrix} \right\Vert_2 \\
&\leq
\sqrt{c d_1} 
+ 
\sqrt{c d_2}. \quad \text{(using the same reasoning)}
\end{align*}
 The second part of the claim follows by bounding $\Vert \nabla x^\top Ay \Vert_\infty$ using the same argument.
\end{proof}

\begin{proof}[Proof of Lemma \ref{lemma:mm_bar_not_bar}]
\begin{align*}
\MM \sum_{t=1}^T \EL_t(x,y)&= \sum_{t=1}^T[\bar{\EL}_t(x_{T+1},y_{T+1}) + \frac{1}{\eta}R_X(x_{T+1}) - \frac{1}{\eta} R_Y(y_{T+1}) ]\\
&\leq \sum_{t=1}^T[\bar{\EL}_t(\bar{x}_{T+1},y_{T+1}) + \frac{1}{\eta}R_X(\bar{x}_{T+1}) - \frac{1}{\eta} R_Y(y_{T+1}) ]  & \text{by Equation \eqref{def_sp}} \\
&\leq \sum_{t=1}^T[\bar{\EL}_t(\bar{x}_{T+1},\bar{y}_{T+1}) + \frac{1}{\eta}R_X(\bar{x}_{T+1}) - \frac{1}{\eta} R_Y(y_{T+1}) ]  & \text{by Equation \eqref{def_sp}} \\
&= \MM  \sum_{t=1}^T[\bar{\EL}_t(x,y) + \frac{T}{\eta}R_X(\bar{x}_{T+1}) - \frac{T}{\eta} R_Y(y_{T+1}) ]\\
& \leq \MM  \sum_{t=1}^T \bar{\EL}_t(x,y) + \frac{T}{\eta}R_X(\bar{x}_{T+1}).
\end{align*}
The other inequality can be obtained by a similar argument.
\end{proof}

\begin{proof}[Proof of Lemma~\ref{lemma:loss_BTL}]
We first prove the second inequality. We proceed by induction. The base case $t=1$ holds by definition of $(x_2,y_2)$, indeed
\begin{align*}
\EL_1(x_2,y_2) + G_\EL \Vert y_1-y_2\Vert &\geq \EL_1(x_2,y_2) := \MM \EL_1(x,y). 
\end{align*} 
We now assume the following claim holds for $T-1$:
\begin{equation}\label{ind_hypo}
\MM \sum_{t=1}^{T-1} \EL_t(x,y)  \geq \sum_{t=1}^{T-1} \EL_t(x_{t+1},y_{t+1}) - G_\EL \sum_{t=1}^{T-1} \Vert y_t-y_{t+1}\Vert,
\end{equation}
and show it holds for $T$. 
\begin{align*}
&\MM \sum_{t=1}^T \EL_t(x,y)\\
& = \sum_{t=1}^{T-1} \EL_t(x_{T+1},y_{T+1}) + \EL_T(x_{T+1},y_{T+1})\\
&\geq \sum_{t=1}^{T-1} \EL_t(x_{T+1},y_T) + \EL_{T}(x_{T+1},y_T)\qquad \qquad\text{by Equation \eqref{def_sp}}\\
& \geq \sum_{t=1}^{T-1} \EL_t(x_T,y_T) + \EL_{T}(x_{T+1},y_T)\qquad \qquad \text{by Equation \eqref{def_sp}}\\
& \geq \sum_{t=1}^{T-1} \EL_t(x_{t+1},y_{t+1}) - G_\EL \sum_{t=1}^{T-1} \Vert y_t-y_{t+1}\Vert + \EL_{T}(x_{T+1},y_T) \qquad \text{by Equation \eqref{ind_hypo}}\\
&= \sum_{t=1}^{T} \EL_t(x_{t+1},y_{t+1})- G_\EL \sum_{t=1}^{T-1} \Vert y_t-y_{t+1}\Vert+ \EL_{T} (x_{T+1},y_T) - \EL_{T}(x_{T+1},y_{T+1})\\
&\geq \sum_{t=1}^{T} \EL_t(x_{t+1},y_{t+1})- G_\EL \sum_{t=1}^{T-1} \Vert y_t-y_{t+1}\Vert - G_\EL \Vert y_T - y_{T+1}\Vert\\
& = \sum_{t=1}^{T} \EL_t(x_{t+1},y_{t+1})- G_\EL \sum_{t=1}^{T} \Vert y_t-y_{t+1}\Vert.
\end{align*}

We now show by induction that 
\begin{align*}
\MM \sum_{t=1}^T \EL_t(x,y) \leq \sum_{t=1}^T \EL_t(x_{t+1},y_{t+1}) + G_\EL \sum_{t=1}^T \Vert x_t-x_{t+1}\Vert.   
\end{align*}
Indeed, $t=1$ follows from the definition of $(x_2,y_2)$. We now assume the claim holds for $T-1$ and prove it for $T$:
\begin{align*}
& \MM \sum_{t=1}^T \EL_t(x,y)\\
&= \sum_{t=1}^T \EL_t(x_{T+1},y_{T+1})\\
 &\leq \sum_{t=1}^{T-1} \EL_t(x_{T},y_{T+1}) + \EL_T(x_T, y_{T+1}) & \text{by Equation \eqref{def_sp}}\\
 &\leq \sum_{t=1}^{T-1} \EL_t(x_{T},y_{T}) + \EL_T(x_T, y_{T+1})& \text{by Equation \eqref{def_sp}}\\
 &\leq \sum_{t=1}^{T-1} \EL_t(x_{t+1},y_{t+1}) + G_\EL \sum_{t=1}^{T-1}\Vert x_{t}-x_{t+1}\Vert \\
 & \qquad + \EL_T(x_T, y_{T+1}) + \EL_T(x_{T+1},y_{T+1}) - \EL_T(x_{T+1},y_{T+1}) & \text{by induction claim} \\
 &\leq  \sum_{t=1}^{T} \EL_t(x_{t+1},y_{t+1}) + G_\EL \sum_{t=1}^{T}\Vert x_{t}-x_{t+1}\Vert & \text{since $\EL_T$ is $G_\EL$-Lipschitz}. 
 \end{align*}
\end{proof}

\begin{proof}[Proof of Lemma \ref{lemma:loss_BTL}]
Fix $t$, define $J(x,y)\triangleq \sum_{\tau=1}^{t-1} \EL_\tau(x,y) + \EL_t(x,y)$ and notice it is $\frac{t}{\eta}$-strongly convex strongly concave with respect to norm $ \Vert \cdot \Vert $. Also notice that $(x_{t+1},y_{t+1})$ is the unique saddle point of $J(x,y)$.

By strong convexity of $J$ and definition of $x_{t+1}$ it holds that for any $x\in X$ and any $y\in Y$
\begin{align*}
J(x,y) \geq J(x_{t+1},y) + \nabla_x J(x_{t+1},y)^{\top}(x - x_{t+1})  + \frac{t}{2 \eta}\Vert x-x_{t+1}\Vert^2.
\end{align*}
Plugging in $y=y_{t+1}$ and recalling the KKT condition $\nabla_x J(x_{t+1},y_{t+1})^{\top}(x - x_{t+1}) \geq 0$, we have that for any $x\in X$
\begin{equation}\label{J_KKT_convex}
\frac{2\eta}{t}\big[ J(x,y_{t+1}) - J(x_{t+1},y_{t+1})\big] \geq \Vert x - x_{t+1}\Vert^2.
\end{equation} 
Similarly, since $J$ is $\frac{t}{\eta}$ strongly concave. That is, for any $y\in Y$
\begin{align*}
J(x_{t+1},y) \leq J(x_{t+1},y_{t+1}) + \nabla_y J(x_{t+1},y_{t+1})^{\top}(y-y_{t+1}) - \frac{t}{2\eta} \Vert y-y_{t+1}\Vert^2.
\end{align*}
Together with the KKT condition $\nabla_y J(x_{t+1},y_{t+1})^{\top}(y-y_{t+1})\leq 0$ we get that for any $y\in Y$
\begin{equation} \label{J_KKT_concave}
\frac{2\eta}{t}\big[ J(x_{t+1},y_{t+1}) - J(x_{t+1},y)\big] \geq \Vert y-y_{t+1}\Vert^2.
\end{equation}
Adding up Equations \eqref{J_KKT_convex} and \eqref{J_KKT_concave}, plugging $x=x_t$ and $y=y_t$ we get
\begin{align*}
& \frac{2\eta}{t} \big[ J(x_t,y_{t+1})-J(x_{t+1},y_t) \big] \geq \Vert x_t - x_{t+1}\Vert^2 + \Vert y-y_{t+1}\Vert^2.
 \\
\iff & \frac{2\eta}{t} \big[ \sum_{\tau=1}^{t-1}\EL_\tau (x_t,y_{t+1}) + \EL_t (x_t,y_{t+1}) - [ \sum_{\tau=1}^{t-1} \EL_{\tau}(x_{t+1},y_t) + \EL_t (x_{t+1},y_t)]\big]  \\
& \qquad \geq \Vert x_t - x_{t+1}\Vert^2 + \Vert y-y_{t+1}\Vert^2 \\
 \implies &\frac{2\eta}{t} \big[ \sum_{\tau=1}^{t-1}\EL_\tau (x_t,y_{t}) + \EL_t (x_t,y_{t+1}) - [ \sum_{\tau=1}^{t-1} \EL_{\tau}(x_{t+1},y_t) + \EL_t (x_{t+1},y_t)]\big] \\
 & \qquad \geq \Vert x_t - x_{t+1}\Vert^2 + \Vert y-y_{t+1}\Vert^2,
\end{align*}
since $\sum_{\tau=1}^{t-1}\EL_\tau (x_t,y_{t+1}) \leq \sum_{\tau=1}^{t-1}\EL_\tau (x_t,y_{t})$. 

Additionally, since $\sum_{\tau=1}^{t-1}\EL_\tau (x_t,y_{t}) \leq \sum_{\tau=1}^{t-1}\EL_\tau (x_{t+1},y_{t})$, we have
\begin{align*}
&\frac{2\eta}{t} \big[ \sum_{\tau=1}^{t-1}\EL_\tau (x_t,y_{t}) + \EL_t (x_t,y_{t+1}) -  \sum_{\tau=1}^{t-1} \EL_{\tau}(x_{t},y_t) - \EL_t (x_{t+1},y_t) \big]\\
& \qquad  \geq \Vert x_t - x_{t+1}\Vert^2 + \Vert y-y_{t+1}\Vert^2\\
\iff & \frac{2\eta}{t} \big[  \EL_t (x_t,y_{t+1}) - \EL_t (x_{t+1},y_t) \big]  \geq \Vert x_t - x_{t+1}\Vert^2 + \Vert y_t-y_{t+1}\Vert^2\\
\iff & \frac{2\eta}{t} \big[ \bar{\EL}_t(x_{t},y_{t+1}) + \frac{1}{\eta} R_X(x_{t})- \frac{1}{\eta}R_Y(y_{t+1}) - \bar{\EL}_t(x_{t+1},y_t)-\frac{1}{\eta}R_X(x_{t+1}) + \frac{1}{\eta}R_Y(y_t) \big]\\
& \qquad \geq \Vert x_t - x_{t+1}\Vert^2 + \Vert y_t-y_{t+1}\Vert^2\\
\implies & \frac{2\eta}{t} \big[G_{\bar{\EL}} \Vert [x_t;y_{t+1}] - [x_{t+1};y_{t}] \Vert  + \frac{1}{\eta} R_X(x_{t}) -\frac{1}{\eta}R_X(x_{t+1}) + \frac{1}{\eta}R_Y(y_t) - \frac{1}{\eta}R_Y(y_{t+1}) \big]\\
& \qquad \geq \Vert x_t - x_{t+1}\Vert^2 + \Vert y_t-y_{t+1}\Vert^2\\
\implies & \frac{2\eta}{t} \big[G_{\bar{\EL}} \Vert x_t - x_{t+1} \Vert +G_{\bar{\EL}} \Vert y_t - y_{t+1} \Vert   + \frac{1}{\eta} R_X(x_{t}) -\frac{1}{\eta}R_X(x_{t+1}) + \frac{1}{\eta}R_Y(y_t) - \frac{1}{\eta}R_Y(y_{t+1}) \big]\\
& \qquad \geq \Vert x_t - x_{t+1}\Vert^2 + \Vert y_t-y_{t+1}\Vert^2\\
\implies & \frac{2\eta}{t} \big[G_{\bar{\EL}} \Vert x_t - x_{t+1} \Vert +G_{\bar{\EL}} \Vert y_t - y_{t+1} \Vert   + \frac{G_{R_X}}{\eta} \Vert x_t - x_{t+1}\Vert + \frac{G_{R_Y}}{\eta} \Vert y_{t}- y_{t+1} \Vert \big]\\
& \qquad \geq \Vert x_t - x_{t+1}\Vert^2 + \Vert y_t-y_{t+1}\Vert^2\\
\implies & \frac{2\eta}{t} [G_{\bar{\EL}}+\frac{1}{\eta}\max(G_{R_X}, G_{R_Y})] \big[\Vert x_t - x_{t+1} \Vert +\Vert y_t - y_{t+1} \Vert \big] \geq \Vert x_t - x_{t+1}\Vert^2 + \Vert y_t-y_{t+1}\Vert^2\\
\iff & \frac{2\eta}{t} [G_{\bar{\EL}}+\frac{1}{\eta}\max(G_{R_X}, G_{R_Y})] \geq \frac{\Vert x_t - x_{t+1}\Vert^2 + \Vert y_t-y_{t+1}\Vert^2}{\Vert x_t - x_{t+1} \Vert +\Vert y_t - y_{t+1} \Vert}.\\
\end{align*}
Finally, since $x^2$ is a convex function $\frac{a^2}{2}  + \frac{b^2}{2}  \geq \big( \frac{a+b}{2}\big)^2$, we have $a^2 + b^2 \geq \frac{(a+b)^2}{2}$. This, together with the last implication, yields the result
\begin{align*}
\frac{4 \eta }{t} [G_{\bar{\EL}}+ \frac{1}{\eta}\max(G_{R_X}, G_{R_Y})]  \geq \Vert x_{t}-x_{t+1} \Vert + \Vert y_t - y_{t+1}\Vert.
\end{align*}
\end{proof}

\begin{proof}[Proof of Theorem \ref{theorem:sp_regret_convex_concave}]
\begin{align*}
& \sum_{t=1}^T \bar{\EL}_t(x_t,y_t) - \MM \sum_{t=1}^T \bar{\EL}_t(x,y)\\
& \leq \sum_{t=1}^T \EL_t(x_t,y_t) - \MM \sum_{t=1}^T \bar{\EL}_t(x,y) + \sum_{t=1}^T \frac{1}{\eta} R_Y(y_t) \quad \text{by Equation \ref{eq:diff_bar_not_bar}}\\
& \leq \sum_{t=1}^T \EL_t(x_t,y_t) - \MM \sum_{t=1}^T \EL_t(x,y) + \sum_{t=1}^T \frac{1}{\eta} R_Y(y_t) + \frac{T}{\eta} R_X(x_{T+1})  \quad \text{by Lemma \ref{lemma:mm_bar_not_bar}}\\
& \leq \sum_{t=1}^T \EL_t(x_t,y_t) - \sum_{t=1}^T \EL_t(x_{t+1},y_{t+1}) + \sum_{t=1}^T \frac{1}{\eta} R_Y(y_t) + \frac{T}{\eta} R_X(x_{T+1}) + G_{\EL} \sum_{t=1}^T \Vert y_t- y_{t+1}\Vert  \quad \text{by Lemma \ref{lemma:loss_BTL}}\\
& \leq \sum_{t=1}^T G_{\EL} (\Vert x_t - x_{t+1}\Vert + \Vert y_{t}-y_{t+1}\Vert) + \sum_{t=1}^T \frac{1}{\eta} R_Y(y_t) + \frac{T}{\eta} R_X(x_{T+1}) + G_{\EL} \sum_{t=1}^T \Vert y_t- y_{t+1}\Vert \\
& \leq \sum_{t=1}^T G_{\EL} (\Vert x_t - x_{t+1}\Vert + \Vert y_{t}-y_{t+1}\Vert) + \sum_{t=1}^T \frac{1}{\eta} R_Y(y_t) + \frac{T}{\eta} R_X(x_{T+1}) + G_{\EL} \sum_{t=1}^T \Vert y_t- y_{t+1}\Vert \\
& \leq 2\sum_{t=1}^T G_{\EL} (\Vert x_t - x_{t+1}\Vert + \Vert y_{t}-y_{t+1}\Vert) + \sum_{t=1}^T \frac{1}{\eta} R_Y(y_t) + \frac{T}{\eta} R_X(x_{T+1}) \\
& \leq 2\sum_{t=1}^T G_{\EL} ( \frac{4 \eta }{t} [G_{\bar{\EL}}+ \frac{1}{\eta}\max(G_{R_X}, G_{R_Y})]
) + \sum_{t=1}^T \frac{1}{\eta} R_Y(y_t) + \frac{T}{\eta} R_X(x_{T+1}) \\
& \leq 8 G_{\EL} \eta [G_{\bar{\EL}}+ \frac{1}{\eta}\max(G_{R_X}, G_{R_Y})] ( 1 + \int_{1}^T \frac{1}{t} dt )+ \sum_{t=1}^T \frac{1}{\eta} R_Y(y_t) + \frac{T}{\eta} R_X(x_{T+1})\\
& \leq 8 G_{\EL}  \eta [G_{\bar{\EL}}+ \frac{1}{\eta}\max(G_{R_X}, G_{R_Y})] ( 1 + \ln(T) )+ \frac{T}{\eta} \max_{y\in Y} R_Y(y) +  \frac{T}{\eta} \max_{x\in X} R_X(x)\\
& \leq 8 \eta [G_{\bar{\EL}}+ \frac{1}{\eta}\max(G_{R_X}, G_{R_Y})]^2 ( 1 + \ln(T) )+ \frac{T}{\eta} \max_{y\in Y} R_Y(y) +  \frac{T}{\eta} \max_{x\in X} R_X(x).
\end{align*}
Notice that  $\MM \sum_{t=1}^T \bar{\EL}_t(x,y) - \sum_{t=1}^T \bar{\EL}_t(x_t,y_t)$ can be upper bounded by the same quantity using the same argument. This concludes the proof.
\end{proof}

\begin{proof}[Proof of Lemma \ref{lemma:entropy_lipschitz}]
We need to find $G_R>0$ such that $\Vert \nabla R(x)\Vert_\infty \leq G_R$ for all $x\in \Delta_\theta$. Notice that $[\nabla R(x)]_i = 1 + \ln(x_i)$ for $i=1,...d$. Moreover, since for every $i=1,...,d$ we have $\theta \leq x_i\leq 1$ the following sequence of inequalities hold: $\ln(\theta)\leq 1 +\ln(\theta) \leq 1+ \ln(x_i) \leq 1$. It follows that $G_R = \max\{|\ln(\theta)|,1\}$.
\end{proof}

\begin{proof}[Proof of Lemma \ref{lemma:dist_proj_sp}]
Choose $z^*=[1;0;0;...;0;0]$, it is easy to see that $z^*_p = [1-\theta (d-1); \theta; \theta; ...; \theta, \theta]$ and $\Vert z^* - z^*_p \Vert_1 = 2\theta(d-1).$ 
\end{proof}

\begin{proof}[Proof of Lemma \ref{lemma:sp_val_error_theta}]
Let $(x^*,y^*)$ be any saddle point pair for $\sum_{t=1}^T \bar{\EL}_t(x,y)$ with $x^*\in \Delta, y^*\in \Delta$.  Let $(x^*_\theta,y^*_\theta)$ be any saddle point pair for $\sum_{t=1}^T \bar{\EL}_t(x,y)$ with $x^*_\theta \in \Delta, y^*_\theta \in \Delta$. Let $x^*_p, y^*_p$ be the projection of $x^*, y^*$ onto the respective simplexes using the $\Vert \cdot \Vert_\infty$ norm. We first show the second inequality. Notice that 
\begin{align*}
\sum_{t=1}^T \bar{\EL}_t (x^*, y^*) & \leq \sum_{t=1}^T \bar{\EL}_t (x^*_\theta, y^*)\\
& \leq \sum_{t=1}^T \bar{\EL}_t (x^*_\theta, y^*_p) + G_{\bar{\EL}}T \Vert y^*_p - y^*\Vert_1\\
& \leq \sum_{t=1}^T \bar{\EL}_t (x^*_\theta, y^*_\theta) + G_{\bar{\EL}}T \Vert y^*_p - y^*\Vert_1.
\end{align*}
To show the first inequality in the statement of the lemma notice that 
\begin{align*}
\sum_{t=1}^T \bar{\EL}_t (x^*, y^*) & \geq \sum_{t=1}^T \bar{\EL}_t (x^*, y^*_\theta) \\
& \geq \sum_{t=1}^T \bar{\EL}_t (x^*_p, y^*_\theta) - G_{\bar{\EL}} T \Vert x^*_p - x^*\Vert_1\\
& \geq \sum_{t=1}^T \bar{\EL}_t(x^*_\theta, y^*_\theta) - G_{\bar{\EL}} T \Vert x^*_p - x^*\Vert_1.
\end{align*}
This concludes the proof.
\end{proof}

\begin{proof}[Proof of Theorem \ref{thm:omg_rftl_regret}]
 For convenience set $\bar{\EL}_t(x,y) = x^\top A_t y$. Let $(x^*, y^*)$ be any saddle point of $\MMD \sum_{t=1}^T x^\top A_t y$, let $(x^*_p, y^*_p)$ be the respective projections onto $\Delta_\theta$ using $\Vert \cdot \Vert_\infty$ norm. By the choice of $\theta$ we have that $|\ln(\theta)|>1$ additionally, notice that $\max_{z\in \Delta_\theta} \sum_{i=1}^d z_i \ln(z_i) + \ln(d) \leq 0 +\ln(d)$ by Jensen's inequality.
\begin{align*}
& \sum_{t=1}^T x_t^\top A_t y_t - \MMD \sum_{t=1}^T x^\top A_t y \\
& \leq \sum_{t=1}^T x_t^\top A_t y_t - \MMDt \sum_{t=1}^T x^\top A_t y + G_{\bar{\EL}} T \Vert x^* - x^*_p\Vert_1  \quad \text{by Lemma \ref{lemma:sp_val_error_theta} }\\
& \leq \sum_{t=1}^T x_t^\top A_t y_t - \MMDt \sum_{t=1}^T x^\top A_t y + 2G_{\bar{\EL}} T \theta (d_1-1)  \quad \text{by Lemma \ref{lemma:dist_proj_sp}}\\
&\leq  8 \eta [G_{\bar{\EL}}+ \frac{1}{\eta}\max(G_{R_X}, G_{R_Y})]^2 ( 1 + \ln(T) )+ \frac{T}{\eta} \max_{y\in \Delta_\theta} R_Y(y) +  \frac{T}{\eta} \max_{x\in \Delta_\theta} R_X(x) + 2G_{\bar{\EL}} T \theta (d_1-1)  \quad \text{by Theorem \ref{theorem:sp_regret_convex_concave}}\\
 &\leq  8 \eta [G_{\bar{\EL}}+ \frac{|\ln(\theta)|}{\eta}]^2 ( 1 + \ln(T) )+ \frac{T}{\eta} \max_{y\in \Delta_\theta} R_Y(y) +  \frac{T}{\eta} \max_{x\in \Delta_\theta} R_X(x) + 2G_{\bar{\EL}} T \theta (d_1-1)\\
 &\leq 32 \eta G_{\bar{\EL}}^2 (1+\ln(T)) + \frac{T}{\eta} \max_{y\in \Delta_\theta} R_Y(y) +  \frac{T}{\eta} \max_{x\in \Delta_\theta} R_X(x) + 2G_{\bar{\EL}} T e^{-\eta G_{\bar{\EL}}} (d_1-1) \quad \text{by the choice of $\theta$}\\
 & \leq 32 \eta G_{\bar{\EL}}^2 (1+\ln(T)) + \frac{T}{\eta}\ln(d_2) +  \frac{T}{\eta} \ln(d_1)+ 2G_{\bar{\EL}} T e^{-\eta G_{\bar{\EL}}} (d_1-1)\\
 & \leq 32 G_{\bar{\EL}} \sqrt{T} (1 + \ln(T)) + \sqrt{T} (\ln d_1+\ln d_2) + 2 d_1 G_{\bar{\EL}} T e^{-\sqrt{T}} \\
& = O\left(\ln(T)\sqrt{T} +  \sqrt{T} \max\{\ln d_1 ,\ln d_2\}\right) +\quad o(1)\max\{d_1,d_2\}.
\end{align*}
The last line follows because $G_{\bar{\EL}}\leq 1$, since each entry of A is bounded between $[-1, 1]$. A symmetrical argument yields the other side of the inequality. 
\end{proof}

\begin{proof}[Proof of Lemma \ref{A_hat_no_hat}]
\begin{align*}
&\mathbb{E}[\sum_{t=1}^T x_t^{\top} \hat{A}_t y_t]\\
& = \mathbb{E}[\sum_{t=1}^{T-1} x_t^{\top} \hat{A}_t y_t] + \mathbb{E}[x_T^{\top} \hat{A}_T y_T] \\
& = \mathbb{E}[\sum_{t=1}^{T-1} x_t^{\top} \hat{A}_t y_t] + \mathbb{E}[ \mathbb{E}[x_T^{\top} \hat{A}_T y_T|\tau=1,...,T-1]]\\
 & = \mathbb{E}[\sum_{t=1}^{T-1} x_t^{\top} \hat{A}_t y_t] + \mathbb{E}[ x_T^{\top} \mathbb{E}[ \hat{A}_T |\tau=1,...,T-1]y_T ]\\
  & = \mathbb{E}[\sum_{t=1}^{T-1} x_t^{\top} \hat{A}_t y_t] + \mathbb{E}[ x_T^{\top} A_T y_T ] \quad \text{by Theorem \ref{thm:hess_estimate}}.
\end{align*}
Repeating the argument $T-1$ more times yields the result.
\end{proof}

\begin{proof}[Proof of Lemma \ref{close_sp_vals}]
Let us fist bound $|\sum_{t=1}^T x^\top A_t y - \sum_{t=1}^T x^\top \hat{A}_t y|$  for any $x \in \Delta_X$ and $y \in \Delta_Y$ with probability 1.
\begin{align*}
& |\sum_{t=1}^T x^\top A_t y - \sum_{t=1}^T x^\top \hat{A}_t y|\\
& = | x^{\top} ( \sum_{t=1}^T A_t y - \sum_{t=1}^T \hat{A}_t y ) |\\
& \leq \Vert x\Vert_2 \Vert \sum_{t=1}^T A_t y - \hat{A}_t y \Vert_2\\
& \leq \Vert \sum_{t=1}^T A_t y - \hat{A}_t y \Vert_2
\end{align*}
It now follows that
\begin{align*}
& \sum_{t=1}^T x^{\top} \hat{A}_t y \leq \sum_{t=1}^T x^{\top}A_t y + \Vert \sum_{t=1}^T A_t y - \hat{A}_t y \Vert_2\\
\implies & \min_{x\in \Delta_{X,\delta}} \sum_{t=1}^T x^{\top} \hat{A}_t y \leq \sum_{t=1}^T x^{\top}A_t y + \Vert \sum_{t=1}^T A_t y - \hat{A}_t y \Vert_2 \quad \forall x \in \Delta_{X,\delta} ,y \in \Delta_{Y,\delta} \\
\implies & \min_{x\in \Delta_{X,\delta}} \sum_{t=1}^T x^{\top} \hat{A}_t y \leq \max_{y\in \Delta_{Y,\delta}} \sum_{t=1}^T x^{\top}A_t y + \Vert \sum_{t=1}^T A_t y - \hat{A}_t y \Vert_2 \quad \forall x \in \Delta_{X,\delta},y \in \Delta_{Y,\delta} \\
\implies & \max_{y\in \Delta_{Y,\delta}} \min_{x\in \Delta_{X,\delta}} \sum_{t=1}^T x^{\top} \hat{A}_t y \leq \min_{x\in \Delta_{X,\delta}} \max_{y\in \Delta_{Y,\delta}} \sum_{t=1}^T x^{\top}A_t y + \Vert \sum_{t=1}^T A_t y - \hat{A}_t y \Vert_2 \quad \forall x \in \Delta_{X,\delta},y \in \Delta_{Y,\delta}. 
\end{align*}
This concludes the proof as  $\max_{y\in \Delta_{Y,\delta}} \min_{x\in \Delta_{X,\delta}} \sum_{t=1}^T x^{\top} \hat{A}_t y = \min_{x\in \Delta_{X,\delta}} \max_{y\in \Delta_{Y,\delta}} \sum_{t=1}^T x^{\top} \hat{A}_t y$ (since the function is convex-concave and the sets $ \Delta_Y^\delta$ and $\Delta_X^\delta$ are convex and compact), the other side of the inequality can be obtained using the other inequality follows from applying the same reasoning.  
\end{proof}

\begin{proof}[Proof of Lemma \ref{lemma_with_alphas}]
For any $y$ define $\alpha_t \triangleq A_t y - \hat{A}_t y$.
We first show that for all $t, t'$ such that $t<t'$ it holds that $\mathbb{E}[\alpha_t^{\top}\alpha_{t'}]=0$. Indeed
\begin{align*}
\mathbb{E}[\alpha_t^{\top} \alpha_{t'}] &= \mathbb{E}[(A_t y -\hat{A}_t y )^{\top}(A_{t'} y -\hat{A}_{t'} y )]\\
&= \mathbb{E}[(A_t y)^{\top}A_{t'} y - (A_t y)^{\top}\hat{A}_{t'} y - (\hat{A}_t y )^{\top}A_{t'} y + (\hat{A}_t y )^{\top} \hat{A}_{t'} y ]\\
& = (A_t y)^{\top}A_{t'} y  - (A_t y)^{\top}A_{t'} y  - (A_t y )^{\top}A_{t'} y + \mathbb{E}[(\hat{A}_t y )^{\top} \hat{A}_{t'} y ]\\
& = (A_t y)^{\top}A_{t'} y  - (A_t y)^{\top}A_{t'} y  - (A_t y )^{\top}A_{t'} y + (A_t y )^{\top} A_{t'} y\\
&= 0,
\end{align*}
where the second to last line follows since 
\begin{align*}
\mathbb{E}[(\hat{A}_t y )^{\top} \hat{A}_{t'} y ] &= \mathbb{E}_{1,...,t'-1}[ \mathbb{E}[(\hat{A}_t y )^{\top} \hat{A}_{t'} y |\tau = 1,..., t'-1]]\\
&= \mathbb{E}_{1,...,t'-1}[ (\hat{A}_t y )^{\top} \mathbb{E}[ \hat{A}_{t'} y |\tau = 1,..., t'-1]]\\
&= \mathbb{E}_{1,...,t'-1}[ (\hat{A}_t y )^{\top} A_{t'}y ]\\
&= (A_t y )^{\top} A_{t'}y.
\end{align*}

Now,
\begin{align*}
\mathbb{E}[ \Vert \sum_{t=1}^T A_t y - \hat{A}_t y \Vert_2 ] & = \sqrt{\mathbb{E}[\Vert \sum_{t=1}^T \alpha_t \Vert_2 ]^2}\\
& \leq \sqrt{ \mathbb{E}[\Vert \sum_{t=1}^T \alpha_t \Vert_2^2 ]} \quad \text{by Jensen's Inequality}\\
& = \sqrt{ \sum_{t=1}^T \mathbb{E}[\Vert \alpha_t\Vert_2^2] + 2 \sum_{t < t'} \mathbb{E}[\alpha_t^{\top}\alpha_{t'}]}\\
& = \sqrt{\sum_{t=1}^T \mathbb{E}[\Vert A_t y - \hat{A}_t y\Vert_2^2]}\\
& \leq \sqrt{ \sum_{t=1}^T \mathbb{E}[2 \Vert A_t y\Vert ^2 + 2\Vert \hat{A}_t y\Vert_2^2 ] }
\end{align*}
We proceed to bound $\Vert \hat{A}_t y\Vert_2$, the upper bound we obtain will also bound $\Vert A_t y\Vert $ because of the following fact. If the random vector $\tilde{a}$ satisfies $\Vert \tilde{a}\Vert \leq c$ for some constant c with probability 1 then $\Vert \mathbb{E}\tilde{a}\Vert \leq c$. Indeed by Jensen's inequality we have that $\Vert \mathbb{E}\tilde{a}\Vert \leq \mathbb{E} \Vert \tilde{a}\Vert \leq c$. Let us omit the subscript $t$ for the rest of the proof. Let $\hat{A}_{[i,:]}$ be the $i$-th row of matrix $\hat{A}$.

\begin{align*}
\Vert \hat{A}y\Vert_2 &= \sqrt{\sum_{i=1}^{d_1} \big[\sum_{j=1}^{d_2} \hat{a}_{i,j} y_j\big]^2}\\
&\leq \sum_{i=1}^{d_1} \sqrt{\big[\sum_{j=1}^{d_2} \hat{a}_{i,j} y_j\big]^2}\\
&= \sum_{i=1}^{d_1} \big| \sum_{j=1}^{d_2} \hat{a}_{i,j} y_j \big| \\
& \leq \sum_{i=1}^{d_1} \Vert \hat{A}_{[i,:]} \Vert_{\infty} \Vert y\Vert_1 \quad \text{by generalized Cauchy Schwartz}\\
& \leq d_1 \max_{i,j} |\frac{A_{i,j}}{\delta^2}| \quad \text{by definition of $\hat{A}$ and using the fact that $ x_t \in \Delta_{X,\delta}$ and $y_t \in \Delta_{Y,\delta}$}\\
& \leq  \frac{d_1}{\delta^2}.
\end{align*}
Notice the upper bound $\frac{d_2}{\delta^2}$ can also be obtained by interchanging the summations and repeating the argument. This yields the desired result.
\end{proof}

\begin{proof}[Proof of Theorem \ref{no_bandit_regret}]
We first focus on one side of the inequality, 
\begin{align*}
& \mathbb{E}[  \sum_{t=1}^T e_{x,t}^{\top}A_t e_{y,t} - \MM  \sum_{t=1}^T x^{\top}A_t y ] \\
& =  \mathbb{E}[  \sum_{t=1}^T e_{x,t}^{\top}A_t e_{y,t}] - \mathbb{E}[\MM  \sum_{t=1}^T x^{\top}A_t y ]\\
& = \mathbb{E}[  \sum_{t=1}^T x_t^{\top}A_t y_t] - \mathbb{E}[\MM  \sum_{t=1}^T x^{\top}A_t y ] \quad \text{by Lemma \ref{e_to_x} }\\
& = \mathbb{E}[  \sum_{t=1}^T x_t^{\top}A_t y_t] - \mathbb{E}[ \min_{x\in \Delta_X^\delta}\max_{y\in \Delta_Y^\delta} \sum_{t=1}^T x^{\top}A_t y ] + 2\delta G_{\bar{\EL}}^{\Vert \cdot \Vert_1}(d_1-1) T \quad {\text{by Lemmas \ref{lemma:dist_proj_sp} and \ref{lemma:sp_val_error_theta} }}\\
& \leq \mathbb{E}[  \sum_{t=1}^T x_t^{\top}A_t y_t] - \mathbb{E}[ \min_{x\in \Delta_X^\delta}\max_{y\in \Delta_Y^\delta} \sum_{t=1}^T x^{\top}\hat{A}_t y ] + \frac{2 \sqrt{T}\min(d_1,d_2)}{\delta^2}  + 2\delta G_{\bar{\EL}}^{\Vert \cdot \Vert_1}(d_1-1) T \quad \text{by Lemmas \ref{close_sp_vals} and \ref{lemma_with_alphas}}\\
& \leq \mathbb{E}[  \sum_{t=1}^T x_t^{\top}\hat{A}_t y_t] - \mathbb{E}[ \min_{x\in \Delta_X^\delta}\max_{y\in \Delta_Y^\delta} \sum_{t=1}^T x^{\top}\hat{A}_t y ] + \frac{2 \sqrt{T}\min(d_1,d_2)}{\delta^2}   + 2\delta G_{\bar{\EL}}^{\Vert \cdot \Vert_1}(d_1-1) T \quad \text{by Lemma \ref{A_hat_no_hat}}\\
& \leq  8 \eta [G_{\hat{\EL}}^{\Vert \cdot \Vert_1} + \frac{\vert \ln(\delta)\vert}{\eta}]^2 (1+\ln(T)) +\frac{T}{\eta} (\ln(d_1)+ \ln(d_2))\\
&\quad + \frac{2 \sqrt{T}\min(d_1,d_2) }{\delta^2} + 2\delta G_{\bar{\EL}}^{\Vert \cdot \Vert_1}(d_1-1) T \quad \text{as in the proof of Theorem \ref{thm:omg_rftl_regret}}\\
& =  8 \eta [\frac{1}{\delta^2}+ \frac{\vert \ln(\delta)\vert}{\eta}]^2 (1+\ln(T)) +\frac{T}{\eta} (\ln(d_1)+ \ln(d_2)) + \frac{2 \sqrt{T} \min(d_1,d_2) }{\delta^2} + 2\delta (d_1-1) T \quad \text{by Lemma \ref{bilinear_lipschitz}}\\
& = O((d_1 + d_2) \ln(T) T^{5/6})\quad \text{ after plugging in $\delta = \frac{1}{T^{1/6}}$, $\eta = T^{1/6}$}
\end{align*}
The other side of the inequality follows by a symmetrical argument.
\end{proof}

\end{appendix}

\end{document}